\theoremstyle{plain}
\newtheorem{theorem}{Theorem}[section]
\newtheorem{lemma}[theorem]{Lemma}
\theoremstyle{definition}
\newtheorem{definition}[theorem]{Definition}
\theoremstyle{remark}
\begin{document}


\articletype{Article}

\title{Semi-supervised K-means++}

\author{
\name{Jordan Yoder\textsuperscript{1}$^{\ast}$\thanks{$^\ast$Corresponding author. Email: jyoder6@jhu.edu}
and Carey E. Priebe\textsuperscript{1}}
\affil{\textsuperscript{1}Applied Mathematics \& Statistics, The Johns Hopkins University, Baltimore, MD}
\received{?}
}

\maketitle

\begin{abstract}
Traditionally, practitioners initialize the {\tt k-means} algorithm with centers chosen uniformly at random.  Randomized initialization with uneven weights ({\tt k-means++}) has recently been used to improve the performance over this strategy in cost and run-time.  We consider the k-means problem with semi-supervised information, where some of the data are pre-labeled, and we seek to label the rest according to the minimum cost solution.  By extending the {\tt k-means++} algorithm and analysis to account for the labels, we derive an improved theoretical bound on expected cost and observe improved performance in simulated and real data examples.  This analysis provides theoretical justification for a roughly linear semi-supervised clustering algorithm. 
\end{abstract}

\begin{keywords}

\textbf{kmeans, clustering, semi-supervised, partially labeled, approximation algorithm}
\end{keywords}

\begin{classcode} 68W20, 68W25, 68U05, 62H30 \end{classcode}

\section{Introduction}
K-means \cite{forgey1965cluster, macqueen1967some} is one of the most widely known clustering algorithms.  The basic problem it solves is as follows:  for a fixed natural number $k$ and dataset $\mathcal{X} \subset \mathbb{R}^d,$ return a set of centers $C = \{c_i \in \mathbb{R}^d :  i = 1, 2, \dots k\}$ such that it is the solution to the {\emph k-means problem}:
\begin{equation} \label{eqn:exact}
C = {\rm argmin}_{\{A \subset \mathbb{R}^d \mbox{ s.t. } |A| = k\}} \sum_{x \in \mathcal{X}} \min_{c \in A} \|x - c\|^2.
\end{equation}  Then, using this set of centers, return one of $k$ labels for each datum:
$$\ell(x) = {\rm argmin}_{i \in \{1, 2, \dots, k\}} \|x - c_i\|.$$

Lloyd's algorithm  \cite{lloyd1982least} is a particularly long-lived strategy for locally solving the k-means problem (cf. Algorithm \ref{alg:lloyd});  it suggests randomly selecting a subset of size $k$ from $\mathcal{X}$ as initial centers, then alternating updates to cluster assignments and new centers. Lloyd's algorithm does not have an approximation guarantee.  Hence, for certain datasets, it can return centers that result in a large value of the objective function in equation (\ref{eqn:exact}) with high probability.  Thus, even running independent copies  of the algorithm and choosing the best result could yield poor results.

Macqueen \cite{macqueen1967some}  conjectures that exactly solving equation (\ref{eqn:exact}) is difficult.  He is correct; Mahajan et al.\ \cite{mahajan2012planar} show that even for two dimensional data, k-means is NP-hard.  Because of this, practitioners instead seek to approximately solve the k-means problem.  Arthur and Vassilvitskii \cite{arthur2007k} present {\tt k-means++}, a randomized $\mathcal{O}(\log(k))$  approximation algorithm running in $\mathcal{O}(k n)$ time (for their initialization step, which is all that is required for the approximation bound) that works by modifying Lloyd's algorithm to choose initial centers with unequal weighting (cf. Algorithm 2).  Their results are remarkable because the algorithm runs in a practical amount of time.  This work inspired others to propose alternative randomized initializations for Lloyd's algorithm for streaming data, \citep{ailon2009streaming} parallel implementations, \citep{bahmani2012scalable} and bi-approximations with extra ($> k$) centers that can then be re-clustered to yield $k$ centers. \citep{aggarwal2009adaptive, ailon2009streaming}

In semi-supervised learning, there is additional information available about the true labels of some of the data.  These typically take the form of label information (e.g. $\ell(x_1)=2$) or pair-wise constrains (e.g. $\ell(x_1) = \ell(x_2)$ or $\ell(x_1) \not= \ell(x_2)).$  In recent years, there has been a fair amount of interest in solving problems with these additional constraints.  Wagstaff et al.\  \cite{wagstaff2001constrained} propose the {\tt COP-KMeans} algorithm, which uses a modified assignment step in Lloyd's algorithm to avoid making cluster assignments that would be in violation of the constraints.
Basu et al.\  \cite{basu2002semi} focused on using label information in their {\tt Seeded-KMeans} and {\tt Constrained-KMeans} algorithms.  Both algorithms use the centroids of the labeled points as initial starting centers.
Basu et al.\ \cite{basu2004probabilistic} use the {\tt Expectation-Maximization} (EM) algorithm \cite{dempster1977maximum} as a modified Llyod's algorithm to modify the pairwise supervision algorithms to include a step wherein the distance measure is modified (so that they do not necessarily use Euclidean distance).
Finley and Joachims \cite{finley2008supervised} learn pairwise similarities to account for semi-supervision.

The structure of the remainder of this paper is as follows.  First, we will introduce the definitions and notation to be used afterwards in the remainder of the paper.  Next, we present the main algorithm, where we modify the {\tt k-means++} algorithm for the semi-supervised with labels case.  We then prove an approximation bound that improves with the amount of supervision.  Finally, we include numerical experiments showing the efficacy of the algorithm on simulated and real data under a few performance metrics.

\begin{algorithm}
\DontPrintSemicolon 
\KwIn{$\mathcal{X}$ ($n$ datapoints)\newline$\mathcal{C}$ ($k$ initial centers)}
\KwOut{$\mathcal{C}$ (updated centers)}

\Repeat{$\mathcal{C}$ has not changed}{
	Assign each $x_i \in \mathcal{X}$ to the nearest center $c(x_i) \in \mathcal{C}$.\\
	Update each $c_j \in \mathcal{X}$ as the centroid of the points $x \in \mathcal{X}$ such that $c(x) = c_j$. 
}
\Return{$\mathcal{C}$}
\caption{Lloyd's k-means algorithm}
\label{alg:lloyd}
\end{algorithm}

\begin{algorithm}
\DontPrintSemicolon 
\KwIn{$\mathcal{X}$ (n datapoints) \newline
$k$ (number of centers)}
\KwOut{$\mathcal{C}$ (set of initial centers)}
Choose an $x \in \mathcal{X}$ uniformly at random. \\
Let $\mathcal{C} = \{x\}.$ \\
\While{card$(\mathcal{C})<k$}{
Choose a datapoint $x \in \mathcal{X}$ with probability proportional to $D^2(x)$. \\
 Update $\mathcal{C} = \mathcal{C} \cup \{x\}.$
}
\Return{$\mathcal{C}$}
\caption{Initialization of centers for k-means++}
\label{algo:kppInit}
\end{algorithm}

\section{Preliminaries}
We will present a few definitions to clarify the notation used in the theoretical results.
Recall that $\mathcal{X} \subset \mathbb{R}^d$ is our data.  We will additionally partition $\mathcal{X} = \mathcal{X}^{(u)} \cup \mathcal{X}^{(s)}$ into the unsupervised and supervised data, respectively.

\begin{definition}[Clustering]
A clustering, say $\mathcal{C}$ is a set of centers that are used for cluster assignments of unlabeled points.
\end{definition}

\begin{definition}[Potential function]
Fix a clustering $\mathcal{C}.$
Define the potential function as $\phi_{\mathcal{C}}: 2^\mathcal{X} \rightarrow \mathbb{R}^+$ be defined such that for $A \subset \mathcal{X}$,
$$\phi_{\mathcal{C}}(A) = \phi(A; \mathcal{C}) = \sum_{a \in A} \min_{c \in \mathcal{C}} \|a - c\|.$$
\end{definition}

\begin{definition}[Optimal cluster]
Let $\mathcal{C}^*$ be a clustering solving the k-means problem from equation (\ref{eqn:exact}).  Let $c^* \in \mathcal{C}^*.$  An optimal cluster $\mathcal{X}_{c^*}$ is defined such that $$\mathcal{X}_{c^*} = \{ x \in \mathcal{X} : c^* =\mbox{{\rm argmin}}_{c \in \mathcal{C}^*}\|x - c\|^2\}$$
\end{definition}

\begin{definition}[$D^2$]
Call the current clustering $\mathcal{C}$.  Define $D^2: \mathbb{R}^d \rightarrow \mathbb{R}^+$ such that $$D^2(x) = \phi(\{x\}; \mathcal{C}).$$
\end{definition}

\section{Semi-supervised K-means++ Algorithm}
We now propose an extension to the {\tt k-means++} algorithm for the semi-supervised case.  We will called this the {\tt ss-k-means++} algorithm.  Suppose that we want to partition our data $\mathcal{X} \subset \mathbb{R}^d$ into $G$ groups. Let us agree that the semi-supervision occurs in the following way: 
\begin{enumerate}
\item[(1)] choose a class $c_i$ uniformly at random;
\item[(2)] choose $g_i$ observations uniformly at random from $\mathcal{X}_{c_i}$
\item[(3)] and label these $g_i$ observations as being from class $i$. 
 \end{enumerate}
     We optionally allow repetition of steps $1-3$ to give more partially supervised classes.  The modified k-means algorithm, which is Algorithm \ref{algo:ssKppInit} followed by Algorithm \ref{alg:lloyd}, replaces the initial step of choosing a point at random by choosing $g_i$ points as above, then setting the first center to the centroid of those points.  Also, during the $D^2$ probabilistic selection process, we do not allow centers to be chosen from the supervised points.  This makes sense because that cluster is already covered.  Note that this essentially the {\tt k-means++} version of the {\tt Constrained-KMeans} algorithm \cite{basu2002semi}.

\begin{algorithm}
\DontPrintSemicolon
\KwIn{$\mathcal{X}^{(u)}$ ($n_u$ unlabeled datapoints) \newline
$\mathcal{X}^{(s)}$ ($n_s$ labeled datapoints) \newline
$L \in \{1, 2, 3, \dots, k\}^{n_s}$ (labels corresponding to the data in $\mathcal{X}^{(s)}$ \newline
$k$ (number of centers)}
\KwOut{$\mathcal{C}$ (set of initial centers)}
Let $n_\ell$ be the number of supervised datapoints with label $\ell.$ \\
Let $\mathcal{C} = \emptyset.$ \\
\For{$\ell = 1, 2, \dots k$}{
    \If{$n_\ell > 0$}{
    Let $c_\ell$ be the centroid of the labeled datapoints with label $\ell.$ \\
    Update $\mathcal{C} = \mathcal{C} \cup \{c_\ell\}.$
    }
}

\While{card$(\mathcal{C})<k$}{
Choose a datapoint $x \in \mathcal{X}^{(u)}$ with probability proportional to $D^2(x)$.\\
 Update $\mathcal{C} = \mathcal{C} \cup \{x\}.$
}
\Return{$\mathcal{C}$}
\caption{Initialization of centers for semi-supervised k-means++}
\label{algo:ssKppInit}
\end{algorithm}

\section{Theoretical Results}\label{sec:theory}
Consider the objective function $\phi(\mathcal{X}; C) = \sum_{x \in \mathcal{X}} \min_{c \in C} \|x - c\|^2,$ the potential function associated with a clustering $C$.  Arthur and Vassilvitskii \cite{arthur2007k} prove that the expectation of the potential function after the seeding phase of the {\tt k-means++} algorithm satisfies $$\mathbb{E}[ \phi(\mathcal{X})] \leq 8 (\log(k) + 2) \phi_{OPT},$$ where $\phi_{OPT}$ corresponds to the potential using the optimal $G$ centers.  We will improve this bound for our algorithm by mostly following their analysis, \emph{mutatis mutandis}.

The sketch of the proof is as follows:

\begin{enumerate}
\item Bound the expectation of the potential for the first cluster (chosen by semi-supervision)
\item Bound the expectation of the potential for clusters with centers chosen via $D^2$ weighting conditioned on the center being from an uncovered cluster.
\item Bound the expectation of the potential when choosing a number of new centers at once in a technical result
\item Specialize the technical result to our algorithm and get the overall bound
\end{enumerate}

Consider a collection of data $A$ of size $n$.  Suppose we have $g$ uniformly chosen at random members of $A$ in a set $S \subset A$ that we consider pre-labeled.  Consider the mean of these datum, say $\bar{a}_S$, to be the proposed center of $A$, then the expectation of the potential function is 
$$\mathbb{E}[\phi(A; \bar{a}_S)] = \mathbb{E}\left[ \sum_{a \in A} \|a - \bar{a}_S\|^2 \right],$$ where the expectation is over the choice of the elements of $S$.  We can compute this expectation explicitly.

\begin{lemma}
\label{BoundOnPotentialUsingRandomMean}
If $S \subset A$ is a subset of $A = \{x_i \in \mathbb{R}^d : i = 1, 2, \dots n\}$ of size $g$ chosen uniformly at random from all subsets of $A$ of size $g$, then 
$$\mathbb{E}[\phi(A; \bar{a}_S)] = \left(1+\frac{n-g}{g(n-1)}\right)\phi_{OPT}(A),$$ where $$\phi_{OPT}(A)= \sum_{a_i \in A} \|a_i - c(A)\|^2,$$ $c(A)$ is the centroid of $A$ (i.e. $c(A) = \frac{1}{n} \sum_{a \in A} a$), and $\bar{a}_S$ is the centroid of $S$.
\end{lemma}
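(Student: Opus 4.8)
The plan is to reduce the expectation to a single variance computation via the parallel axis theorem, and then evaluate that variance using the first two moments of sampling without replacement.

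First I would decompose the potential around the true centroid $c(A)$. For any fixed point $p \in \mathbb{R}^d$ the identity
$$\sum_{a \in A}\|a - p\|^2 = \sum_{a \in A}\|a - c(A)\|^2 + n\,\|c(A) - p\|^2$$
holds, since expanding the left side and using $\sum_{a \in A}(a - c(A)) = 0$ kills the cross term. Applying this with $p = \bar{a}_S$ gives $\phi(A; \bar{a}_S) = \phi_{OPT}(A) + n\,\|\bar{a}_S - c(A)\|^2$, so after taking expectations (over the choice of $S$) the problem collapses to computing $\mathbb{E}[\|\bar{a}_S - c(A)\|^2]$, i.e. the variance of the sample mean $\bar{a}_S$ under the uniform distribution on $g$-subsets of $A$.

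Second, to evaluate this variance I would write $\bar{a}_S = \tfrac{1}{g}\sum_{i=1}^n \mathbbm{1}\{i \in S\}\,a_i$ and translate so that $c(A) = 0$, which changes neither side because $\bar{a}_S - c(A)$, $\phi_{OPT}(A)$, and $\phi(A;\bar a_S)$ are all translation invariant. I then expand $\mathbb{E}[\|\bar{a}_S\|^2]$ into diagonal and off-diagonal pieces. The two moments needed are $\mathbb{E}[\mathbbm{1}\{i \in S\}] = g/n$ and $\mathbb{E}[\mathbbm{1}\{i \in S\}\,\mathbbm{1}\{j \in S\}] = \tfrac{g(g-1)}{n(n-1)}$ for $i \ne j$, both read directly off the uniform law on $g$-subsets. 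The key simplification is that after centering $\sum_{i \ne j}\langle a_i, a_j\rangle = \bigl\|\sum_i a_i\bigr\|^2 - \sum_i \|a_i\|^2 = -\,\phi_{OPT}(A)$, so the off-diagonal terms recombine with the diagonal terms to leave a single scalar multiple of $\phi_{OPT}(A)$.

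Carrying out this algebra yields $\mathbb{E}[\|\bar{a}_S - c(A)\|^2] = \tfrac{n-g}{g\,n\,(n-1)}\,\phi_{OPT}(A)$; substituting into the decomposition and multiplying the variance term by $n$ produces the claimed factor $1 + \tfrac{n-g}{g(n-1)}$. I expect the only delicate point to be the bookkeeping in the off-diagonal sum, namely getting the without-replacement second moment $\tfrac{g(g-1)}{n(n-1)}$ to line up with the centering identity $\sum_{i \ne j}\langle a_i,a_j\rangle = -\phi_{OPT}(A)$; everything else is a routine expansion, and since $A$ is finite there are no convergence or measurability concerns.
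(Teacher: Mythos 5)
Your proposal is correct and follows essentially the same route as the paper: the parallel axis decomposition (the paper's Lemma \ref{distanceToArbPoint}), followed by computing the variance of the sample mean under sampling without replacement using the indicator moments $g/n$ and $g(g-1)/(n(n-1))$ (the paper's Lemmas \ref{ExpectationOfSubset} and \ref{SecondMomentOfSubset}). Your choice to translate so that $c(A)=0$ before expanding is a minor algebraic simplification over the paper's version, which carries the $c(A)$ terms through explicitly, but the argument is the same.
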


\begin{proof}
Let $n = card(A)$. Observe 
\begin{eqnarray*}
\mathbb{E} \left[ \phi(A; \bar{a}_S ) \right] &=& \mathbb{E} \left[ \sum_{a \in A} \|a - \bar{a}_S\|^2\right] \\
&=& \mathbb{E} \left[ \sum_{a \in A} \|a -c(A)\|^2 + n \|\bar{a}_S - c(A) \|^2 \right] \mbox{ by Lemma } \ref{distanceToArbPoint}\\
&=& \sum_{a \in A} \|a -c(A)\|^2 + n \mathbb{E} \left[ \|\bar{a}_S - c(A) \|^2 \right] \\
&=& \phi_{OPT}(A) + n \mathbb{E} \left[ \|\bar{a}_S - c(A) \|^2 \right].
\end{eqnarray*}

Let us determine $ \mathbb{E} \left[ \|\bar{a}_S - c(A) \|^2 \right].$  Observe
\begin{eqnarray*}
\mathbb{E} \left[ \|\bar{a}_S - c(A) \|^2 \right]&=& \mathbb{E} \left[ \bar{a}_S^T \bar{a}_S \right] - 2 c(A)^T \mathbb{E}\left[ \bar{a}_S \right] + c(A)^Tc(A) \\
&=&\mathbb{E} \left[ \bar{a}_S^T \bar{a}_S \right]  - 2 g^{-1} c(A)^T \mathbb{E}\left[ \left(\sum_{a \in S}^n a\right) \right] + c(A)^Tc(A) \\
&=& \mathbb{E} \left[ \bar{a}_S^T \bar{a}_S \right]  - 2 g^{-1} c(A)^T \frac{g}{n} \sum_{i=1}^n a_i + c(A)^Tc(A) \mbox{ by Lemma } \ref{ExpectationOfSubset}\\
&=&\mathbb{E} \left[ \bar{a}_S^T \bar{a}_S \right]  - 2 c(A)^T c(A)+ c(A)^Tc(A) \\
&=& g^{-2}\mathbb{E} \left[ \left(\sum_{a \in S}^n a\right)^T \left(\sum_{a \in S}^n a\right) \right] -  c(A)^T c(A).
\end{eqnarray*}
Applying Lemma \ref{SecondMomentOfSubset}, we have
\begin{eqnarray*}
\mathbb{E} \left[ \|\bar{a}_S - c(A) \|^2 \right] &=& g^{-2}\left( \frac{g (g-1)}{ n (n-1)} \sum_{i \not= j} a_i^T a_j + \frac{g}{n} \sum_{i=1}^n a_i^T a_i\right) -  c(A)^T c(A)\\
&=& - \frac{n-g}{g(n-1)} c(A)^Tc(A)+ \frac{n-g}{gn(n-1)} \sum_{i=1}^n a_i^T a_i  \\
&=&  \frac{n-g}{g(n-1)} \left(\frac{1}{n}\sum_{i=1}^n a_i^T a_i  - c(A)^Tc(A)\right)  \\
&=&  \frac{n-g}{gn(n-1)} \phi_{OPT}(A).  \\
\end{eqnarray*}

Hence,  $\mathbb{E} \left[ \phi(A; \bar{a}_S ) \right] = \left(1+\frac{n-g}{g(n-1)}\right) \phi_{OPT}(A).$
\end{proof}
We present several technical lemmas used above.
\begin{lemma}
\label{distanceToArbPoint}
Let $A \subset \mathbb{R}^d$ and $c$ be the centroid of A.  Let $n:=card(A).$  For any $z \in \mathbb{R}^d$,
$\sum_{a \in A} \|a - z\|^2 = \sum_{a \in A} \|a -c\|^2 + n \| z - c \|^2$
\end{lemma}
\begin{proof}
Observe
\begin{eqnarray*}
\sum_{a \in A} \|a - c\|^2 &=& \sum_{a \in A} \| (a - z) + (z - c)\|^2 \\
&=& \sum_{a \in A} \| a - z\|^2 + 2 n(c - z)^T(z-c) + n \| z - c\|^2 \\
&=& \sum_{a \in A} \| a - z\|^2 - n \| z - c\|^2.\\
\end{eqnarray*}
Hence, $$\sum_{a \in A} \|a - z\|^2 = \sum_{a \in A} \|a -c\|^2 + n \| z - c \|^2,$$
which was what was wanted. 
\end{proof}

\begin{lemma}
\label{ExpectationOfSubset}
If $S \subset A$ is a subset of $A = \{x_i \in \mathbb{R}^d : i = 1, 2, \dots n\}$ of size $g$ chosen uniformly at random from all subsets of $A$ of size $g$, then 
$$\mathbb{E}[\sum_{x_i \in S} x_i] = \frac{g}{n} \sum_{x_i \in A} x_i.$$
\end{lemma}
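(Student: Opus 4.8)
The final statement to prove is Lemma~\ref{ExpectationOfSubset}: if $S$ is a uniformly random $g$-subset of $A = \{x_1, \dots, x_n\}$, then $\mathbb{E}[\sum_{x_i \in S} x_i] = \frac{g}{n} \sum_{x_i \in A} x_i$.

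My plan is to rewrite the random sum over $S$ as a deterministic sum over all of $A$ weighted by indicator random variables, and then apply linearity of expectation. Concretely, let $\mathbbm{1}_i$ denote the indicator of the event $\{x_i \in S\}$, so that $\sum_{x_i \in S} x_i = \sum_{i=1}^n \mathbbm{1}_i \, x_i$. By linearity of expectation, $\mathbb{E}[\sum_{x_i \in S} x_i] = \sum_{i=1}^n \mathbb{E}[\mathbbm{1}_i] \, x_i = \sum_{i=1}^n \mathbb{P}(x_i \in S) \, x_i$. The problem thus reduces to computing the single marginal inclusion probability $\mathbb{P}(x_i \in S)$ for an arbitrary index $i$.

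The key step is to show that each element is included with probability exactly $g/n$. There are two clean ways to see this. A counting argument: the number of $g$-subsets of $A$ containing a fixed $x_i$ is $\binom{n-1}{g-1}$, while the total number of $g$-subsets is $\binom{n}{g}$, so $\mathbb{P}(x_i \in S) = \binom{n-1}{g-1} / \binom{n}{g} = g/n$ after simplifying the binomial coefficients. Alternatively, a symmetry argument avoids arithmetic: the distribution is invariant under permutations of the index set, so all marginal inclusion probabilities are equal; since $\sum_{i=1}^n \mathbb{P}(x_i \in S) = \mathbb{E}[|S|] = g$ (the sum of indicators equals the fixed cardinality $g$), each must equal $g/n$. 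Substituting $\mathbb{P}(x_i \in S) = g/n$ back into the linearity-of-expectation expression yields $\mathbb{E}[\sum_{x_i \in S} x_i] = \frac{g}{n} \sum_{i=1}^n x_i$, which is the claim.

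I do not anticipate any genuine obstacle here; this is a routine first-moment computation, and the vector-valued nature of the $x_i$ causes no difficulty since expectation acts coordinatewise and linearity applies in $\mathbb{R}^d$ exactly as in $\mathbb{R}$. The only point requiring a moment's care is justifying that $\mathbb{P}(x_i \in S) = g/n$ uniformly across $i$, which is precisely where the symmetry or counting argument above is invoked; I would favor the symmetry argument for brevity, since it sidesteps the binomial manipulation entirely. This lemma will then feed directly into the proof of Lemma~\ref{BoundOnPotentialUsingRandomMean}, where it is used to evaluate $c(A)^T \mathbb{E}[\sum_{a \in S} a]$.
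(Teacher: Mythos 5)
Your proof is correct and follows essentially the same route as the paper: indicator variables for membership in $S$, linearity of expectation, and the marginal inclusion probability $\binom{n-1}{g-1}/\binom{n}{g}=g/n$ (your symmetry argument is a minor cosmetic variant of that last computation). No gaps.
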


\begin{proof}
Let $X_i$ be the indicator random variable that is $1$ if $x_i \in S$ and $0$ otherwise (i.e. $X_i = \mathbbm{1}_{x_i \in S}).$
Observe
\begin{eqnarray*}
\mathbb{E}[\sum_{x_i \in S} x_i] &=& \mathbb{E}[ \sum_{i=1}^{n} x_i X_i | \sum_{i=1}^n X_i = g] \\
&=&  \sum_{i=1}^{n} x_i \mathbb{E}[ X_i | \sum_{i=1}^n X_i = g] \\
\end{eqnarray*}

Observe $\mathbb{E}[ X_i | \sum_{i=1}^n X_i = g] = \frac{\binom{n - 1}{g - 1}}{\binom{n}{g}}$, the probability that $x_i$ is chosen from for a group of size $g$ from $n$ objects in $A$.
The conclusion follows. 
\end{proof}

\begin{lemma}
\label{SecondMomentOfSubset}
If $S \subset A$ is a subset of $A = \{x_i \in \mathbb{R}^d : i = 1, 2, \dots n\}$ of size $g$ chosen uniformly at random from all subsets of $A$ of size $g$, then 
$$\mathbb{E} \left[\left(\sum_{x_i \in S }x_i\right)^T \left( \sum_{x_i \in S} x_i \right)\right] = \frac{g (g-1)}{ n (n-1)} \sum_{i \not= j} x_i^T x_j + \frac{g}{n} \sum_{i=1}^n x_i^T x_i.$$
\end{lemma}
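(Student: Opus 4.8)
The plan is to reuse the indicator-variable bookkeeping that already drives the proof of Lemma \ref{ExpectationOfSubset}. First I would set $X_i = \mathbbm{1}_{x_i \in S}$, so that $\sum_{x_i \in S} x_i = \sum_{i=1}^n X_i x_i$. Then the quadratic form expands as a genuine double sum,
$$\left(\sum_{x_i \in S} x_i\right)^T \left(\sum_{x_j \in S} x_j\right) = \sum_{i=1}^n \sum_{j=1}^n X_i X_j \, x_i^T x_j,$$
and taking expectation and pulling the (deterministic) vectors out by linearity reduces the whole task to computing the scalar coefficients $\mathbb{E}[X_i X_j]$.

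The second step is to split the double sum into its diagonal part ($i = j$) and its off-diagonal part ($i \neq j$), since these carry different inclusion probabilities. On the diagonal, because each $X_i$ is an indicator we have $X_i^2 = X_i$, so $\mathbb{E}[X_i^2] = \mathbb{E}[X_i] = g/n$; this is precisely the single-point selection probability already established in Lemma \ref{ExpectationOfSubset}. Collecting these terms produces the $\frac{g}{n} \sum_{i=1}^n x_i^T x_i$ summand of the claimed identity.

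For the off-diagonal terms I would compute the joint inclusion probability $\mathbb{E}[X_i X_j] = P(x_i \in S \text{ and } x_j \in S)$ for $i \neq j$. By the same uniform-over-subsets counting used earlier, fixing two distinguished points forces the remaining $g-2$ elements of $S$ to be chosen from the other $n-2$, giving
$$\mathbb{E}[X_i X_j] = \frac{\binom{n-2}{g-2}}{\binom{n}{g}} = \frac{g(g-1)}{n(n-1)}.$$
Substituting this constant into the off-diagonal block yields the $\frac{g(g-1)}{n(n-1)} \sum_{i \neq j} x_i^T x_j$ summand, and combining the two blocks gives the stated formula.

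The only nonroutine point is the pairwise inclusion probability: one must be careful that $\mathbb{E}[X_i X_j]$ for distinct indices is the hypergeometric $g(g-1)/(n(n-1))$ rather than the naive square $(g/n)^2$, which is where the $n-1$ correction (and ultimately the $\frac{n-g}{g(n-1)}$ factor in Lemma \ref{BoundOnPotentialUsingRandomMean}) comes from. Everything else is linearity of expectation and the elementary identity $X_i^2 = X_i$, so I expect no real obstacle beyond getting this counting argument right.
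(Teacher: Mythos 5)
Your proposal is correct and matches the paper's own argument essentially step for step: the same indicator variables $X_i = \mathbbm{1}_{x_i \in S}$, the same expansion into a double sum, and the same diagonal/off-diagonal split with inclusion probabilities $\binom{n-1}{g-1}/\binom{n}{g} = g/n$ and $\binom{n-2}{g-2}/\binom{n}{g} = g(g-1)/(n(n-1))$. The only cosmetic difference is that the paper phrases the expectations as conditional on $\sum_\ell X_\ell = g$ and leaves the binomial ratios unsimplified, whereas you simplify them explicitly.
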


\begin{proof}
Let $X_i$ be the indicator random variable that is $1$ if $x_i \in S$ and $0$ otherwise (i.e. $X_i = \mathbbm{1}_{x_i \in S}).$
Observe
\begin{eqnarray*}
\mathbb{E}\left[\left(\sum_{x_i \in S }x_i\right)^T \left( \sum_{x_i \in S} x_i \right)\right] &=& \mathbb{E}\left[ \left(\sum_{i=1}^{n} x_i X_i \right)^T \left(\sum_{j=1}^{n} x_j X_j \right) | \sum_{\ell=1}^n X_\ell = g\right] \\
&=&  \mathbb{E}\left[ \sum_{i,j}^{n} x_i^T x_j X_i X_j \ | \sum_{\ell=1}^n X_\ell = g\right] \\
&=&  \sum_{i,j}^{n} x_i^T x_j \mathbb{E}\left[  X_i X_j \ | \sum_{\ell=1}^n X_\ell = g \right]. \\
\end{eqnarray*}

Observe $$ \mathbb{E}\left[  X_i X_j \ | \sum_{\ell=1}^n X_\ell= g\right]  =  \begin{cases}
\frac{\binom{n-2}{g-2}}{\binom{n}{g}} & \mbox{ if } i \not=j \\
\frac{\binom{n - 1}{g - 1}}{\binom{n}{g}} & \mbox{ if } i = j \end{cases},$$ since the first case represents the probability that $x_i$ and $x_j$ are chosen together and the second case represents the probability that $x_i$ is chosen, as $X_i^2 = X_i$.
The conclusion follows. 
\end{proof}

The first result will handle the semi-supervised cluster.  Suppose that $C_{OPT}$ is the optimal set of cluster centers.  Now, we consider the contribution to the potential function of a cluster $\mathcal{X}^*:=\{x \in \mathcal{X} : c^* = argmin_{c \in C_{OPT}} \|x - c\|^2\}$ from $C_{OPT}$ when a center is chosen from $C^*$ with $D^2$ weighting. If we can prove a good approximation bound, then we can say that conditioned on choosing centers from uncovered clusters, we will have a good result on average.

\begin{lemma}
\label{res:dsqContrib}
Let $C$ be the current (arbitrary) set of cluster centers.  Note that $C$ is probably not a subset of $C_{OPT}$.  Let $\mathcal{X}^*$ be any cluster in $C_{OPT}$.  Let $x^*$ be a point from $\mathcal{X}^*$ chosen at random with $D^2$ weighting.   Then, $$\mathbb{E}\left[ \phi(\mathcal{X}^*; C \cup \{x^*\}) | x^* \in \mathcal{X}^* \right] \leq 8 \phi(\mathcal{X}^*; C_{OPT}),$$ where the expectation is over the choice of new center $x^*$.
\end{lemma}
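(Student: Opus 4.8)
The plan is to adapt, \emph{mutatis mutandis}, the argument of Arthur and Vassilvitskii \cite{arthur2007k} for the corresponding unsupervised lemma. First I would write the conditional expectation explicitly. Conditioned on the event $x^* \in \mathcal{X}^*$, the chosen point equals a particular $a_0 \in \mathcal{X}^*$ with probability $D^2(a_0) / \sum_{a \in \mathcal{X}^*} D^2(a)$, and once $a_0$ is adjoined to $C$ the contribution of each $a \in \mathcal{X}^*$ becomes $\min(D^2(a), \|a - a_0\|^2)$, since the potential takes the minimum over $C \cup \{a_0\}$. Hence
$$\mathbb{E}\left[\phi(\mathcal{X}^*; C \cup \{x^*\}) \mid x^* \in \mathcal{X}^*\right] = \sum_{a_0 \in \mathcal{X}^*} \frac{D^2(a_0)}{\sum_{a \in \mathcal{X}^*} D^2(a)} \sum_{a \in \mathcal{X}^*} \min\!\left(D^2(a), \|a - a_0\|^2\right).$$

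The key step is to control the awkward factor $D^2(a_0)$ that appears in each summand. Since $D$ measures distance to the nearest center of $C$, the triangle inequality gives $D(a_0) \le D(a) + \|a - a_0\|$ for every $a \in \mathcal{X}^*$, and applying the power-mean bound $(p+q)^2 \le 2p^2 + 2q^2$ yields $D^2(a_0) \le 2D^2(a) + 2\|a - a_0\|^2$. Averaging this inequality over all $m := \mathrm{card}(\mathcal{X}^*)$ choices of $a$ produces
$$D^2(a_0) \le \frac{2}{m} \sum_{a \in \mathcal{X}^*} D^2(a) + \frac{2}{m} \sum_{a \in \mathcal{X}^*} \|a - a_0\|^2,$$
which replaces the unwieldy $D^2(a_0)$ by averages of the current potential and of squared distances to $a_0$.

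I would then substitute this bound into the numerator of the expectation and split the result into two sums. In the piece carrying the factor $\sum_{a} D^2(a)$, I bound the inner minimum by $\min(D^2(a), \|a - a_0\|^2) \le \|a - a_0\|^2$; in the piece carrying $\sum_{a}\|a - a_0\|^2$, I instead bound $\min(D^2(a), \|a - a_0\|^2) \le D^2(a)$. In both terms the normalizing factor $\sum_a D^2(a)$ cancels, and each collapses to the same symmetric quantity, giving
$$\mathbb{E}\left[\phi(\mathcal{X}^*; C \cup \{x^*\}) \mid x^* \in \mathcal{X}^*\right] \le \frac{4}{m} \sum_{a_0 \in \mathcal{X}^*} \sum_{a \in \mathcal{X}^*} \|a - a_0\|^2.$$
To finish, I would invoke the pairwise-distance identity $\sum_{a_0, a \in \mathcal{X}^*} \|a - a_0\|^2 = 2m \sum_{a \in \mathcal{X}^*} \|a - c(\mathcal{X}^*)\|^2$, where $c(\mathcal{X}^*)$ is the centroid; this follows by summing Lemma \ref{distanceToArbPoint} over the choice of the reference point. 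Because the optimal center of an optimal cluster is its centroid, $\sum_{a \in \mathcal{X}^*} \|a - c(\mathcal{X}^*)\|^2 = \phi(\mathcal{X}^*; C_{OPT})$, and the right-hand side reduces to exactly $8\,\phi(\mathcal{X}^*; C_{OPT})$.

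The main obstacle is the asymmetric treatment of the minimum in the two split sums: choosing which argument of $\min(D^2(a), \|a - a_0\|^2)$ to discard in each term so that both pieces reduce to the identical symmetric pairwise sum is the crux of the argument, and it is precisely where the constant $8$ originates. Everything else is either the triangle/power-mean inequality or the centroid identity already essentially supplied by Lemma \ref{distanceToArbPoint}.
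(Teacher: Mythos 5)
Your argument is correct and is precisely the proof of Lemma 3.2 in Arthur and Vassilvitskii, which is all the paper itself offers here (its ``proof'' is the single line ``Unchanged from Lemma 3.2 in \cite{arthur2007k}''). The triangle/power-mean bound on $D^2(a_0)$, the asymmetric splitting of the $\min$, and the pairwise-distance identity are exactly the cited argument, so the proposal matches the paper's approach.
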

\begin{proof}
Unchanged from Lemma 3.2 in \cite{arthur2007k}.
\end{proof}

\begin{lemma}
\label{lem:inductiveBound}
Fix a clustering $C.$  Suppose there are $u \in \mathbb{N}$ uncovered clusters from the optimal clustering $C_{OPT}.$  Denote the points in these uncovered clusters as $\mathcal{X}_u$ (not to be confused with $\mathcal{X}^{(u)}$).  Let $$\mathcal{X}_c = \mathcal{X} - \mathcal{X}_u$$ be the set of points in covered clusters. Use $D^2$ weighting (excluding supervised data) to add $t \leq u$ new centers to $C$ to form $C'$.  In a slight abuse of notation, let $\phi(\cdot) = \phi(\cdot; C)$, $\phi'(\cdot) = \phi(\cdot; C'),$ and $\phi_{OPT} = \phi(\cdot; C_{OPT})$.  Then, 
$$\mathbb{E}[\phi'(\mathcal{X})] \leq \left(\phi(\mathcal{X}_c) + 8 \phi_{OPT}(\mathcal{X}_u)\right)(1+H_t) + \frac{u-t}{u}\phi(\mathcal{X}_u),$$ where $H_t = \sum_{j=1}^t \frac{1}{j}.$
\end{lemma}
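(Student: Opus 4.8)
The plan is to prove Lemma~\ref{lem:inductiveBound} by induction on the number of added centers $t$ (with the number of uncovered clusters $u$ ranging freely at each level), following the scheme of Arthur and Vassilvitskii \cite{arthur2007k} but carrying along the covered/uncovered decomposition. First I would dispatch two base cases. When $t=0$ the clustering is unchanged, so $\mathbb{E}[\phi'(\mathcal{X})] = \phi(\mathcal{X}) = \phi(\mathcal{X}_c) + \phi(\mathcal{X}_u)$, and since $H_0 = 0$ and $(u-0)/u = 1$ the claimed right-hand side is $\phi(\mathcal{X}_c) + 8\phi_{OPT}(\mathcal{X}_u) + \phi(\mathcal{X}_u)$, which dominates it. When $t = u = 1$ I would condition on whether the single new center falls in $\mathcal{X}_c$ (probability $\phi(\mathcal{X}_c)/\phi(\mathcal{X})$, after which the potential can only decrease) or in $\mathcal{X}_u$ (probability $\phi(\mathcal{X}_u)/\phi(\mathcal{X})$, after which Lemma~\ref{res:dsqContrib} gives $\mathbb{E}[\phi'(\mathcal{X}_u)] \le 8\phi_{OPT}(\mathcal{X}_u)$); bounding $\phi(\mathcal{X}_u)/\phi(\mathcal{X}) \le 1$ then yields $2(\phi(\mathcal{X}_c) + 8\phi_{OPT}(\mathcal{X}_u))$, which is exactly the target since $H_1 = 1$.

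For the inductive step I would condition on where the \emph{first} of the $t$ new centers lands, handling the remaining $t-1$ selections by the inductive hypothesis applied to the updated clustering (whose potential is pointwise no larger than $\phi$). If the first center lands in the covered region $\mathcal{X}_c$, the number of uncovered clusters stays at $u$, so the hypothesis at parameter $t-1$ (with $u$ uncovered) applies. If it lands in a particular uncovered cluster $\mathcal{X}_a$ — which happens with probability $\phi(\mathcal{X}_a)/\phi(\mathcal{X})$ and, given that, picks the point by $D^2$ weighting — then $\mathcal{X}_a$ becomes covered, the count drops to $u-1$, and I would invoke the hypothesis at parameter $t-1$ (with $u-1$ uncovered) together with Lemma~\ref{res:dsqContrib} to bound the expected contribution of $\mathcal{X}_a$ by $8\phi_{OPT}(\mathcal{X}_a)$. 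A pleasant cancellation occurs here: the $+8\phi_{OPT}(\mathcal{X}_a)$ from the newly covered cluster cancels the $-8\phi_{OPT}(\mathcal{X}_a)$ hidden in $8\phi_{OPT}(\mathcal{X}_u - \mathcal{X}_a)$, so every branch contributes the identical leading factor $(\phi(\mathcal{X}_c) + 8\phi_{OPT}(\mathcal{X}_u))(1 + H_{t-1})$, whose branch probabilities sum to one.

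After summing the leading factors I would be left with residual terms $\frac{\phi(\mathcal{X}_c)}{\phi(\mathcal{X})}\cdot\frac{u-t+1}{u}\phi(\mathcal{X}_u)$ from the covered branch and $\sum_a \frac{\phi(\mathcal{X}_a)}{\phi(\mathcal{X})}\cdot\frac{u-t}{u-1}\big(\phi(\mathcal{X}_u) - \phi(\mathcal{X}_a)\big)$ from the uncovered branches. The main obstacle is controlling the sum over uncovered clusters: here I would use the power-mean inequality $\sum_a \phi(\mathcal{X}_a)^2 \ge \frac{1}{u}\phi(\mathcal{X}_u)^2$ (valid since $\sum_a \phi(\mathcal{X}_a) = \phi(\mathcal{X}_u)$ over the $u$ uncovered clusters) to get $\sum_a \phi(\mathcal{X}_a)\big(\phi(\mathcal{X}_u) - \phi(\mathcal{X}_a)\big) \le \frac{u-1}{u}\phi(\mathcal{X}_u)^2$, which conveniently clears the awkward $\frac{1}{u-1}$ factor. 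Collecting the residuals and applying the identity $\phi(\mathcal{X}) = \phi(\mathcal{X}_c) + \phi(\mathcal{X}_u)$ simplifies them to $\frac{u-t}{u}\phi(\mathcal{X}_u) + \frac{\phi(\mathcal{X}_u)\phi(\mathcal{X}_c)}{u\,\phi(\mathcal{X})}$. Finally I would bound $\phi(\mathcal{X}_u)/\phi(\mathcal{X}) \le 1$ and $\frac{1}{u} \le \frac{1}{t}$ (since $t \le u$) to absorb the second piece into $\frac{1}{t}(\phi(\mathcal{X}_c) + 8\phi_{OPT}(\mathcal{X}_u))$, which is precisely the increment $H_t - H_{t-1} = 1/t$ needed to close the induction. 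The delicate bookkeeping is thus the telescoping of the $\phi_{OPT}(\mathcal{X}_a)$ terms and the power-mean step; the remainder is algebra driven by $\phi(\mathcal{X}) = \phi(\mathcal{X}_c) + \phi(\mathcal{X}_u)$.
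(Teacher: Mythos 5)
Your proposal is essentially the paper's proof written out in full: the paper's own argument for this lemma is a two-line pointer to Lemma 3.3 of Arthur and Vassilvitskii, and your induction on $(t,u)$ with the covered/uncovered case split, the cancellation of the $8\phi_{OPT}(\mathcal{X}_a)$ terms via Lemma~\ref{res:dsqContrib}, and the power-mean step is exactly that argument. The one point you gloss over is the only place the semi-supervised setting actually changes anything, and it is the one thing the paper's proof explicitly flags: the selection excludes supervised points, so the branch probabilities are $\phi(A \cap \mathcal{X}^{(u)})/\phi(\mathcal{X}^{(u)})$, not $\phi(A)/\phi(\mathcal{X})$ as you wrote. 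This does not break your argument --- since $\mathcal{X}^{(s)} \cap \mathcal{X}_u = \emptyset$ (supervised clusters are covered), the identity you rely on becomes $\phi(\mathcal{X}^{(u)}) = \phi(\mathcal{X}_c \cap \mathcal{X}^{(u)}) + \phi(\mathcal{X}_u)$ and every cancellation and residual bound goes through verbatim, with the final absorption step using $\phi(\mathcal{X}_c \cap \mathcal{X}^{(u)}) \leq \phi(\mathcal{X}_c)$ --- but a complete write-up should carry the correct denominators, as that substitution is precisely the content of the paper's proof.
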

\begin{proof}

We have that the probability of choosing a point from a fixed set $A$ with $D^2$ weighting ignoring supervised points is $$\frac{\phi(A \cap \mathcal{X}^{(u)})}{\phi(\mathcal{X}^{(u)})}.$$  Further, note that
$\mathcal{X}^{(s)} \cap \mathcal{X}_u = \emptyset$, since all supervised clusters are covered.

Following the argument in \cite{arthur2007k} using the above probabilities, we have our result.
\end{proof}

\begin{theorem}
\label{thm:apxbnd}
Suppose our story about how the supervision occurs holds.  Let $C = \emptyset.$  For each label $\ell$ that we have supervised exemplars of, add the centroid of the supervised data labeled $\ell,$ say $c_\ell$ to C.  Suppose that $card(C) = G.$  Let $n_{\ell_j}$ be the number of supervised exemplars with label $\ell_j$ for $j=1, 2, \dots, G.$  
Then, we have $u=k-G$ uncovered clusters.  
Add $t=u$ new centers using $D^2$ weighting ignoring the supervised points.  The expectation of the resulting potential, $\phi'$, is then
$$\mathbb{E}[\phi'(\mathcal{X})] \leq 8 \phi_{OPT}(\mathcal{X})(2+\log(k-G)).$$

\end{theorem}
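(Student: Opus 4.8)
The plan is to reduce the theorem to a single application of the inductive bound of Lemma~\ref{lem:inductiveBound}, specialized to adding exactly $t = u = k-G$ centers, and then to control the two surviving quantities on its right-hand side: the current potential $\phi(\mathcal{X}_c)$ of the covered (supervised) clusters and the harmonic number $H_u$. First I would fix the realized supervision, so that the initial clustering $C$ consists of the $G$ centroids $c_{\ell_1}, \dots, c_{\ell_G}$ and the split $\mathcal{X} = \mathcal{X}_c \cup \mathcal{X}_u$ into covered and uncovered points is determined. With $t = u$ the term $\frac{u-t}{u}\phi(\mathcal{X}_u)$ in Lemma~\ref{lem:inductiveBound} vanishes, leaving the conditional bound
$$\mathbb{E}[\phi'(\mathcal{X}) \mid C] \leq \left(\phi(\mathcal{X}_c) + 8\phi_{OPT}(\mathcal{X}_u)\right)(1 + H_u).$$

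Next I would take expectation over the randomness in the supervision, i.e.\ over which points within each covered cluster are pre-labeled, using the law of total expectation. Conditioning on which clusters are covered, the quantities $\phi_{OPT}(\mathcal{X}_u)$ and $H_u = H_{k-G}$ are deterministic, so only $\phi(\mathcal{X}_c)$ needs averaging, and this is exactly where Lemma~\ref{BoundOnPotentialUsingRandomMean} enters. Writing $A_j$ for the $j$-th covered optimal cluster (of size $N_j$) and $c_{\ell_j} = \bar{a}_{S_j}$ for the centroid of its $g_j = n_{\ell_j}$ pre-labeled points, the monotonicity $\phi(A_j; C) \leq \phi(A_j; c_{\ell_j})$ — taking the minimum over fewer centers only increases the potential — combined with Lemma~\ref{BoundOnPotentialUsingRandomMean} gives
$$\mathbb{E}[\phi(\mathcal{X}_c)] \leq \sum_{j=1}^{G} \left(1 + \frac{N_j - g_j}{g_j(N_j - 1)}\right)\phi_{OPT}(A_j).$$

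The observation that closes the argument is that for every $g_j \geq 1$ the prefactor satisfies $1 + \frac{N_j - g_j}{g_j(N_j-1)} \leq 2 \leq 8$, since the fraction is decreasing in $g_j$ and equals $1$ at $g_j = 1$ (clusters with $N_j = 1$ contribute $\phi_{OPT}(A_j) = 0$ and may be ignored). Hence $\mathbb{E}[\phi(\mathcal{X}_c)] \leq 8\sum_j \phi_{OPT}(A_j) = 8\phi_{OPT}(\mathcal{X}_c)$, which together with the $8\phi_{OPT}(\mathcal{X}_u)$ term collapses the first factor to $8\phi_{OPT}(\mathcal{X})$. Applying the standard estimate $H_{k-G} \leq 1 + \log(k-G)$ then yields $1 + H_u \leq 2 + \log(k-G)$, giving the claimed bound $\mathbb{E}[\phi'(\mathcal{X})] \leq 8\phi_{OPT}(\mathcal{X})(2 + \log(k-G))$.

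The main obstacle I anticipate is the bookkeeping around the two distinct sources of randomness — the choice of supervised exemplars, which fixes $C$, and the subsequent $D^2$ sampling of the $u$ new centers — since Lemma~\ref{lem:inductiveBound} is stated conditionally on a fixed $C$ whereas the theorem's expectation runs over both. The law of total expectation resolves this cleanly precisely because, once the set of covered clusters is fixed, $\phi_{OPT}(\mathcal{X}_u)$ and $H_u$ are constant, so the averaging over supervision bears solely on $\phi(\mathcal{X}_c)$, which is the one object Lemma~\ref{BoundOnPotentialUsingRandomMean} was designed to bound. I would also take care that the covered-cluster potential being controlled is $\phi(\mathcal{X}_c)$ with respect to the current clustering $C$, not its optimal value, so that the monotonicity step is genuinely needed before Lemma~\ref{BoundOnPotentialUsingRandomMean} can be applied cluster by cluster.
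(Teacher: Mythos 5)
Your proposal is correct and follows essentially the same route as the paper: apply Lemma~\ref{lem:inductiveBound} with $t=u=k-G$ so the $\frac{u-t}{u}\phi(\mathcal{X}_u)$ term vanishes, control the covered-cluster potential via Lemma~\ref{BoundOnPotentialUsingRandomMean}, and finish with $H_{k-G}\leq 1+\log(k-G)$. If anything, your write-up is more careful than the paper's, since you make explicit the monotonicity step $\phi(A_j;C)\leq\phi(A_j;c_{\ell_j})$, the law of total expectation over the two sources of randomness, and the bound $1+\frac{N_j-g_j}{g_j(N_j-1)}\leq 2\leq 8$ that the paper leaves implicit.
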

\begin{proof}
Applying Lemma \ref{lem:inductiveBound} with $u=t=k-G,$ we have
$$\mathbb{E}[\phi'(\mathcal{X})] \leq \left(\sum_{j=1}^G\left(\phi(\mathcal{X}_{\ell_j}) -8 \phi_{OPT}(\mathcal{X}_{\ell_j})\right)+ 8 \phi_{OPT}(\mathcal{X})\right)(1+H_{k-G}).$$
Applying Lemma \ref{BoundOnPotentialUsingRandomMean} to each $\phi(\mathcal{X}_{\ell_j}),$ we have
$$\mathbb{E}[\phi'(\mathcal{X})] \leq \left(\sum_{j=1}^G \left(7 - \frac{n-n_{\ell_j}}{n_{\ell_j}(n-1)}\right) \phi_{OPT}(\mathcal{X}_{\ell_j}) + 8 \phi_{OPT}(\mathcal{X})\right)(1+H_{k-G}).$$
Finally, using the fact that $H_{k-G} \leq 1 + \log(k-G)$, we have our result.
\end{proof}
The end result is a modest improvement over that of \cite{arthur2007k} that scales with the level of supervision. The final inequality in the proof is tighter than the result stated in the theorem, since the factor of $8$ could be lower depending on the contributions of the supervised clusters in the optimal clustering.

\section{Numerical Experiments}
\subsection{Performance Measures}
We use several measures for each experiment.  First, we use the cost, as estimated by the potential function.  For comparing to the theoretical bound, we also use the fraction of optimal cost, where ``optimal'' is derived by taking the centroids for each class as determined by the ground truth labels.  Next, we use the number of Lloyds iterations until convergence.

Finally, we will use the Adjusted Rand Index (ARI) \cite{hubert1985comparing}, which is an index that compares how closely two partitions agree.  The ARI is the Rand index, the ratio of number of agreements between two partitions, after adjusting for chance.  It is essentially chance at 0, meaningless $<0$, and perfect at its maximal value, unity.  Since ground truth labels are available for our datasets, we can compare them to the partitions yielded from the output of the algorithms in Section \ref{sec:alg}.  Thus, a large ARI value indicates good clustering performance as determined by fidelity to the ground truth partition.

\subsection{Data}\label{sec:sskppData}
We showcase our algorithm on three datasets (cf. Figure \ref{fig:data} for depiction).  The first, {\tt Gaussian Mixture} was inspired by both \cite{arthur2007k, bahmani2012scalable}.   We drew $k=24$ centers from the 15-dimensional hypercube with side length of $10$.  For each center $c_\ell$, we drew $n_\ell=100$ points from a multivariate Guassian with mean $c_\ell$ and identity covariance.  This dataset is remarkable because it is easy to cluster by inspection (at least with larger side-length, as in the original papers) yet is difficult for Lloyd's algorithm when initialized with bad centers.  For our chosen side length, it is not easy to cluster by eye.  Note that the supervision story (where centroids of the class labels correspond to best centers) is likely to hold for most realizations of the data.

The next two datasets are real data for which the assumption that the labels match up with minimum cost clusters is not met.
The second dataset is the venerable {\tt Iris} dataset \citep{fisher1936use}, which uses $d=4$ variables to describe $k=3$ different classes of flowers.  While this dataset is old, it is nonetheless difficult for {\tt k-means} to handle from a clustering standpoint.  This fact is widely known; indeed, even the Wikipedia page for the {\tt Iris} dataset has a rendering of {\tt k-means} failing on it.\citep{wiki:Iris}  We compared the ARI for this dataset and the {\tt Gaussian Mixture} dataset while varying the ratio of side length of the hypercube to standard deviation ($\Sigma = \sigma I$ with $\sigma = 1$ fixed), and we found that the datasets were roughly equivalent for side length around 3.25.  This is under one percent of the side length and $10^{-32}$ times the volume of the {\tt norm25} dataset \cite{bahmani2012scalable} that our {\tt Gaussian Mixture} dataset is based on.  Thus, we observe that the {\tt Iris} dataset is harder to cluster than the synthetic dataset.

The third dataset, {\tt Hyperspectral}, is a Naval Space Command HyMap hyperspectral dataset representing an aerial photo a runway and background terrain of the airport at Dalgren, Virginia as originally seen in \cite{priebe2004integrated}  (cf. Figure \ref{fig:hypPic} for a depiction of the location). Each pixel in the photo is associated with $d=126$ features representing different spectral bands (e.g. visible and infrared).  We took the first six principal components to form a dataset with $n =14748$ data in $\mathbb{R}^6$, as chosen by the minimum number of dimensions to capture $> 97.5\%$ of the total variance.  The first two principal components are depicted in Fig \ref{fig:data}.  The $k=7$ classes are the identities of each pixel (i.e.  runway, pine, oak, grass, water, scrub, and swamp).  Based on the ARI scores presented in the forthcoming results section, this dataset is only a little easier to cluster than {\tt Iris}.

\begin{figure*}[t!]
    \centering
    \includegraphics[width=.95\textwidth]{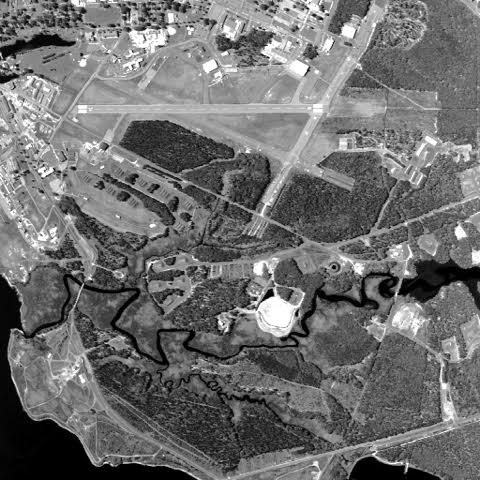}
    \caption{Image corresponding to the {\tt Hyperspectral} dataset as seen in Figure 2 of \cite{priebe2004integrated}.  Each pixel can be classified according to what it represents.}
    \label{fig:hypPic}
\end{figure*}

\begin{figure*}[t!]
    \centering
    \subfigure[{\tt Gaussian Mixture}] {\includegraphics[width=.45\textwidth]{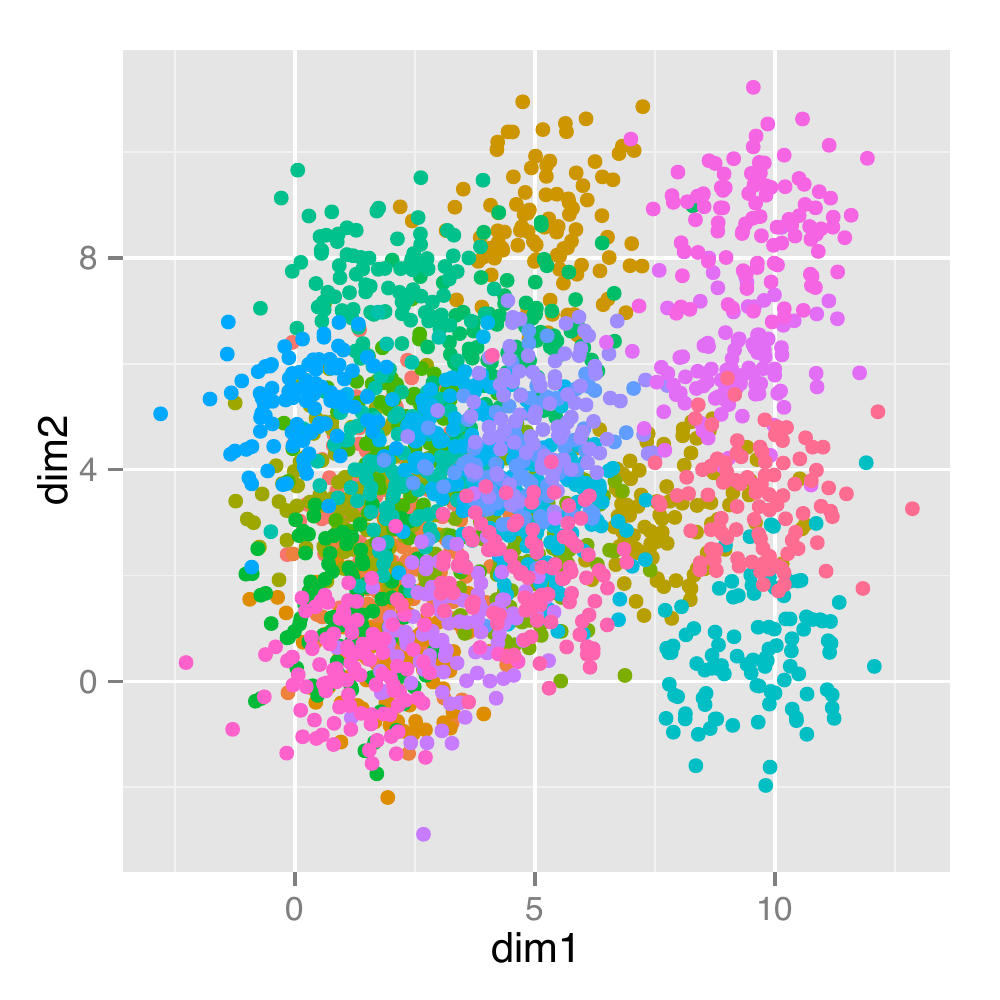}}
     \subfigure[{\tt Iris}]{ \includegraphics[width=.45\textwidth]{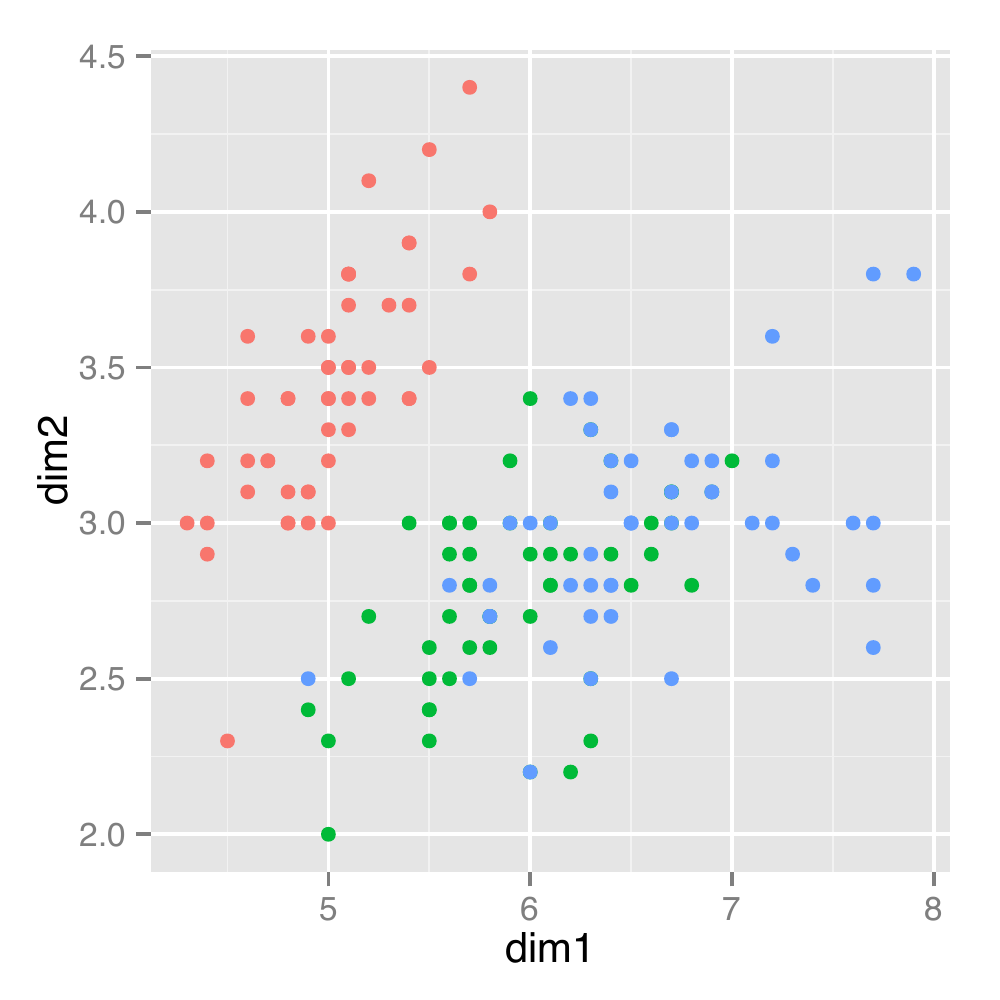}}
      \subfigure[{\tt Hyperspectral}]{ \includegraphics[width=.45\textwidth]{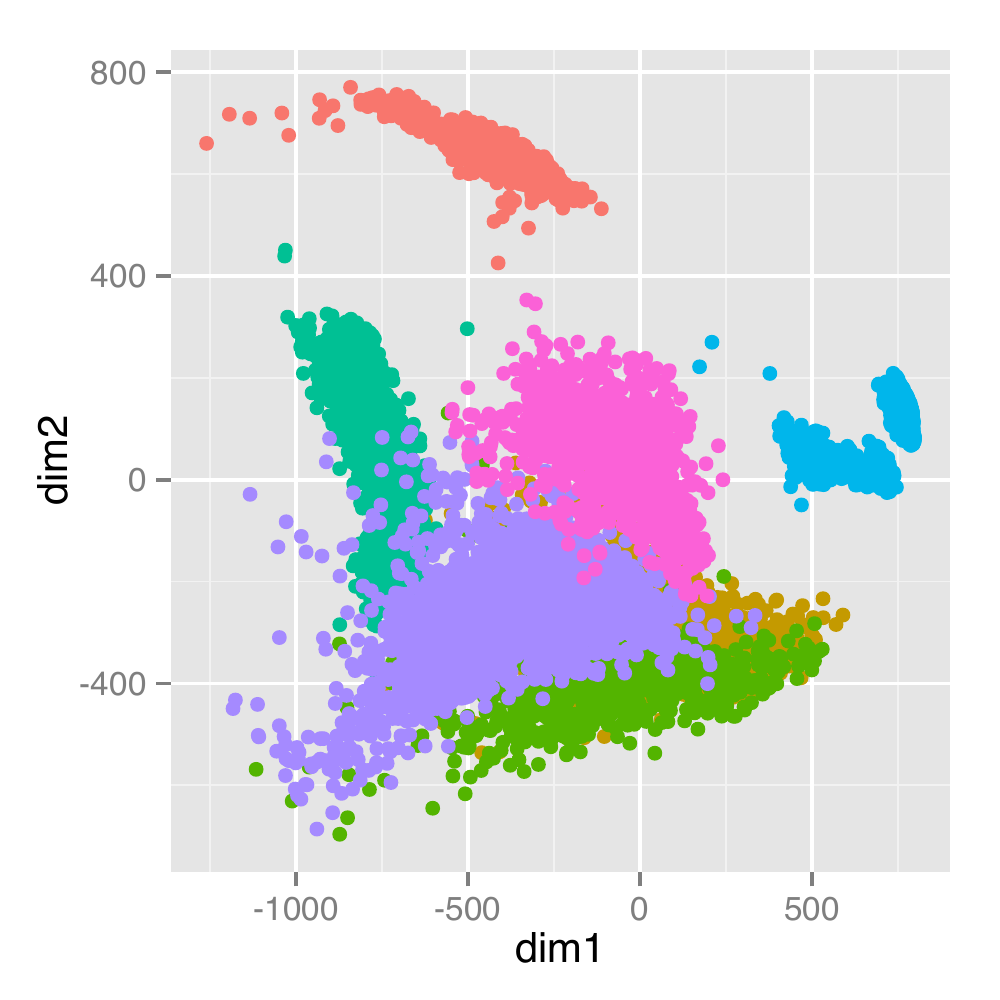}}
    \caption{First two dimensions of the datasets (one realization for {\tt Gaussian Mixture}).  Because {\tt Gaussian Mixture} has 13 more dimensions than are shown here, clustering it is considerably easier than this figure would imply.  Note, however, that we have overlapping classes (as denoted by the colors) in all datasets.}
    \label{fig:data}
\end{figure*}

\subsection{Algorithms}\label{sec:alg}
For both datasets, we apply several algorithms: {\tt ss-k-means++}, {\tt Constrained-KMeans},  {\tt ss-k-means++} (without Lloyds), {\tt Constrained-KMeans} (without Lloyds), and  {\tt Constrained-KMeans} algorithm initialized at the true class centroids.  We consider the latter algorithm as an approximation to the optimal solution.  {\tt Constrained-KMeans} and {\tt Constrained-KMeans} (without Lloyds) use a random sample of the unsupervised data weighted uniformly for the remaining initial centers (after using centroids of the labeled points).  The algorithms without Lloyds use their respective initialization strategy to choose initial centers then move straight to class assignment without updating the initial centers.  We consider these algorithms as ``initialization only'' methods for this reason.

\subsection{Results}
We vary the supervision level from 0\% to 100\%, where we add supervised classes and sample 5/5/50 datapoints per class to label for {\tt Guassian Mixture}, {\tt Iris}, and {\tt Hyperspectral}, respectively. Note that this is percent of clusters which have exemplars and not percent of all points which are labels. Also, at 100\% supervision, {\tt ss-k-means++} and {\tt Constrained-KMeans} are the same, since there are no additional centers to choose.  We did not allow the supervised data to change cluster assignment, so the approximation to the optimal can change with the level of supervision and with different supervised data chosen                                                                                                                                                                                     .  We set $k$ equal to the true number of groups (24 for {\tt Gaussian Mixture}, 3 for {\tt Iris}, and 7 for {\tt Hyperspectral}).  We used 100 Monte Carlo replicates at each level of supervision.

Figure \ref{fig:cost} shows the cost as the level of supervision changes.  We observe the cost decreases with more supervision.  Also, we see the same relative performances of the algorithms, with the ++ version outperforming the benchmark.  Observe that the approximation to the optimal solution is the best.  Figure \ref{fig:fcost} depicts the theoretical bound.  All algorithms are below the bound (in expectation).

\begin{figure*}[t!]
    \centering
    \subfigure[{\tt Gaussian Mixture}] {\includegraphics[width=.45\textwidth]{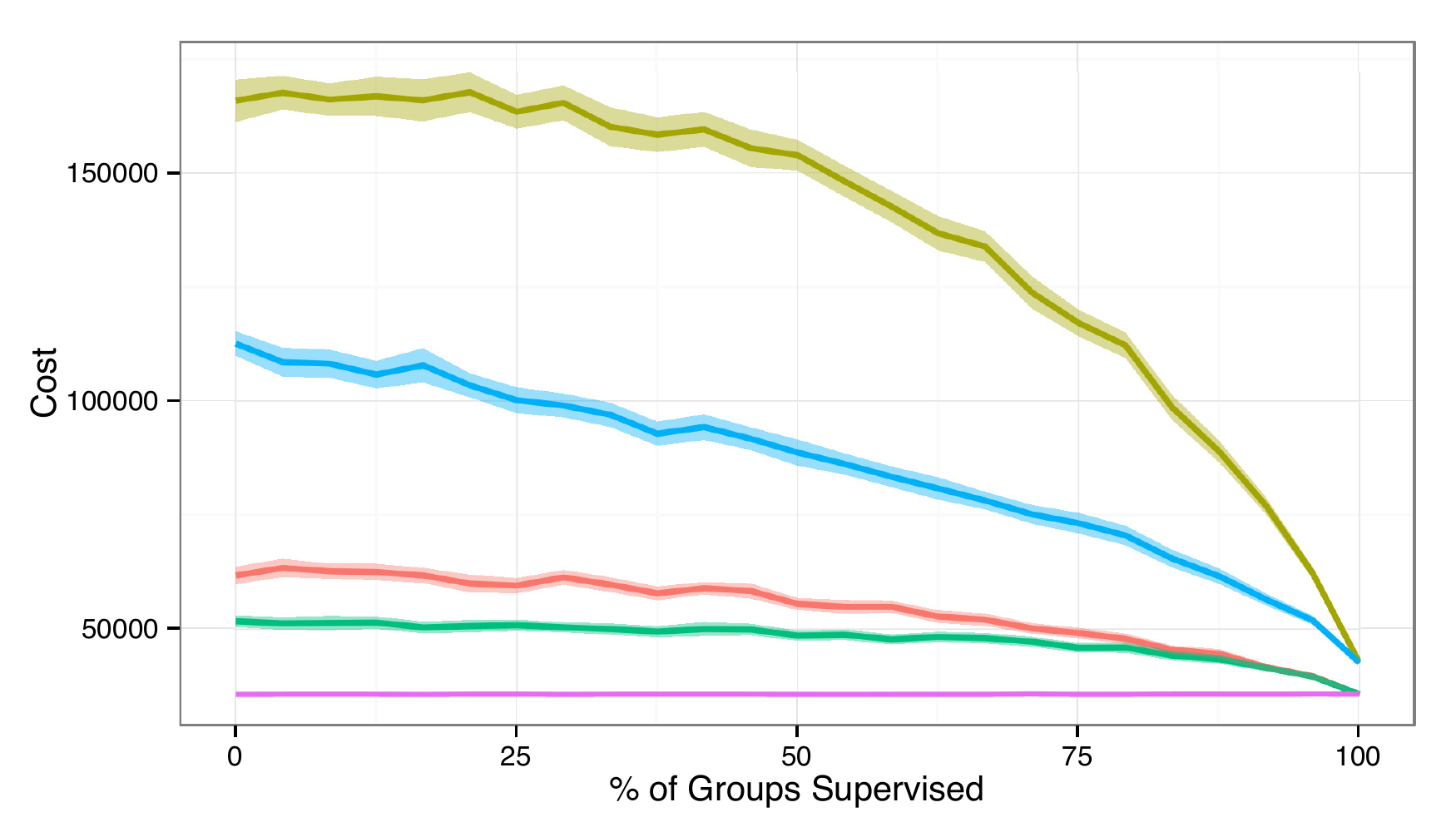}}
     \subfigure[{\tt Iris}]{ \includegraphics[width=.45\textwidth]{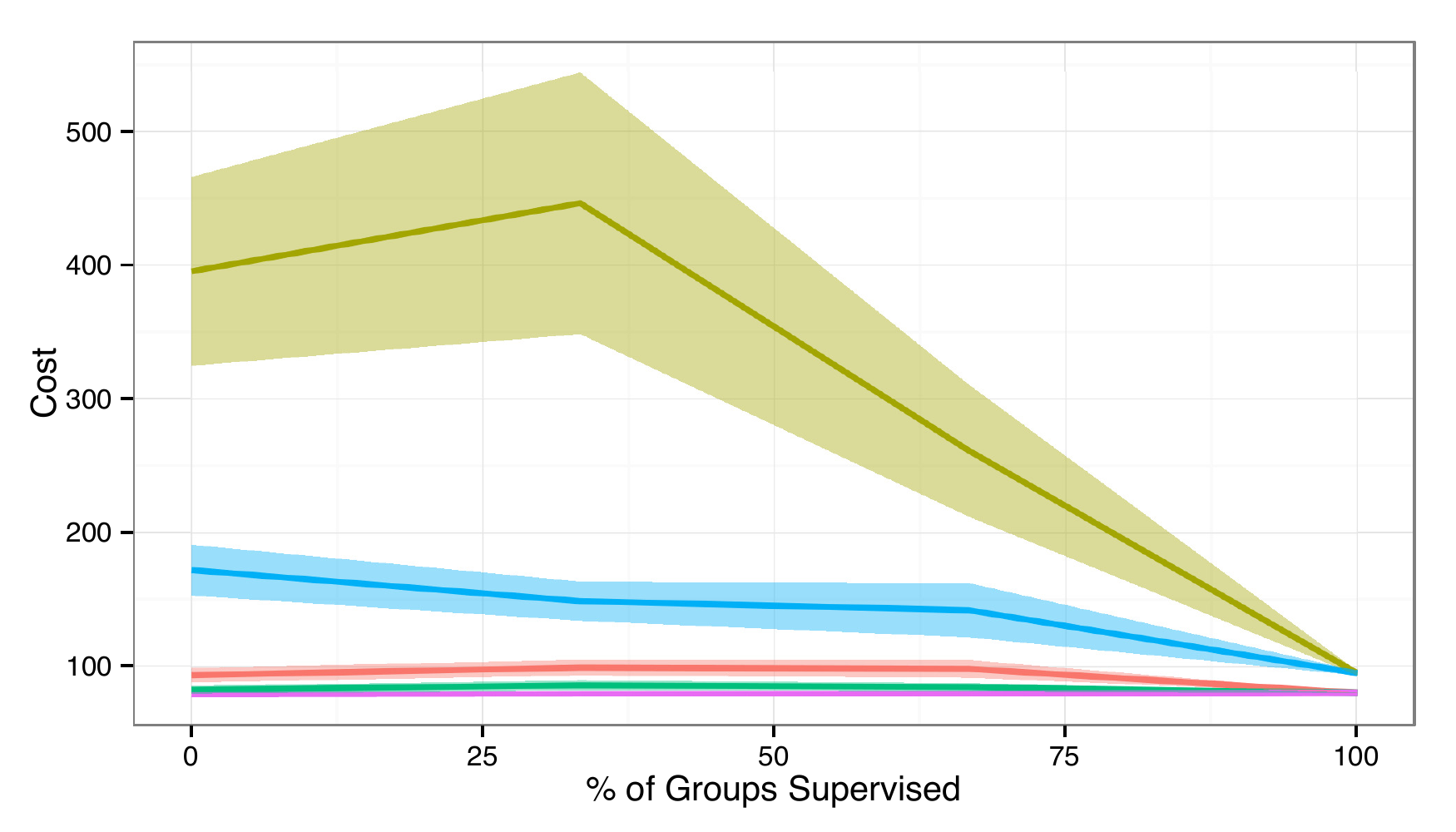}}
      \subfigure[{\tt Hyperspectral}]{ \includegraphics[width=.45\textwidth]{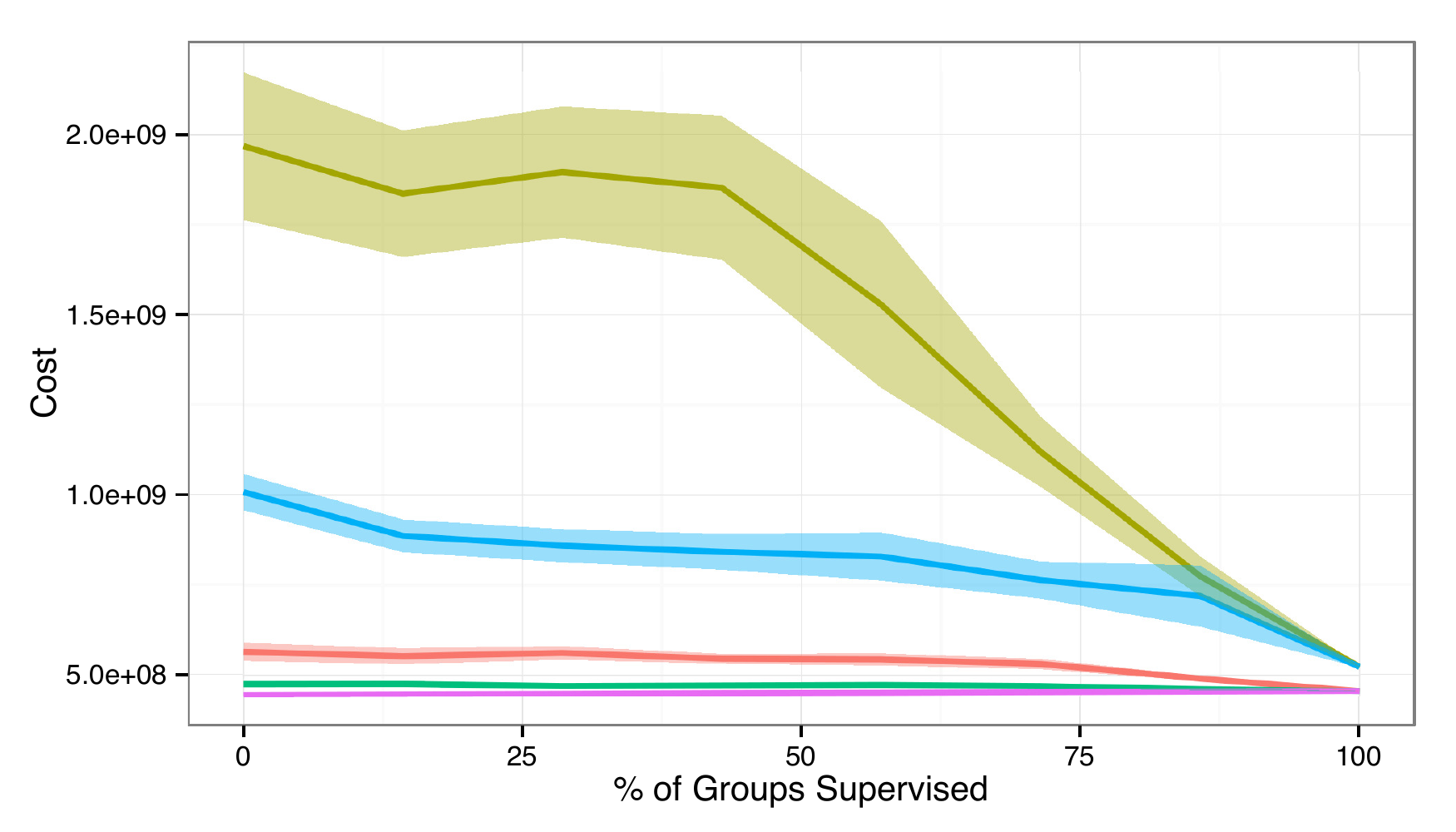}}
    \caption{Cost (value of the potential) shown as a function of the level of supervision for 100 Monte Carlo replicates.  Shading indicates $\pm$ two standard deviations.
         Colors indicate algorithm:\\
    \textcolor[rgb]{.62, .65, 	.03}{gold}: {\tt Constrained-KMeans} (without Lloyds iterations);\\
    \textcolor[rgb]{0.0, .72, .96}{blue}:  {\tt ss-k-means++} (without Lloyds iterations);\\
    \textcolor[rgb]{1.0, .44, .42}{red}:  {\tt Constrained-KMeans};\\
    \textcolor[rgb]{0.0, .76, .53}{green}: {\tt ss-k-means++}; and\\
    \textcolor[rgb]{.91, .40, .94}{pink}: {\tt Constrained-KMeans} initialized at true centroids of labels.}
    \label{fig:cost}
\end{figure*}

\begin{figure*}[t!]
    \centering
    \subfigure[{\tt Gaussian Mixture}] {\includegraphics[width=.45\textwidth]{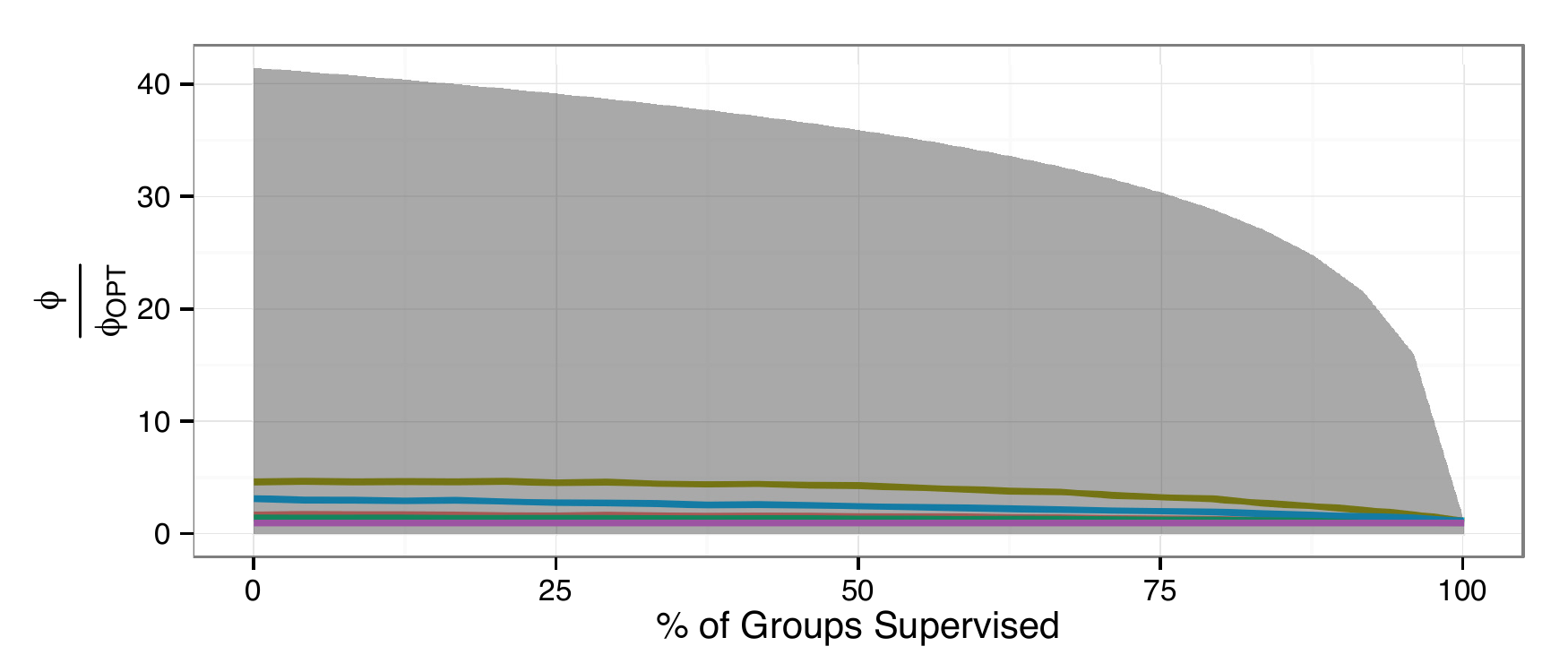}}
     \subfigure[{\tt Iris}]{ \includegraphics[width=.45\textwidth]{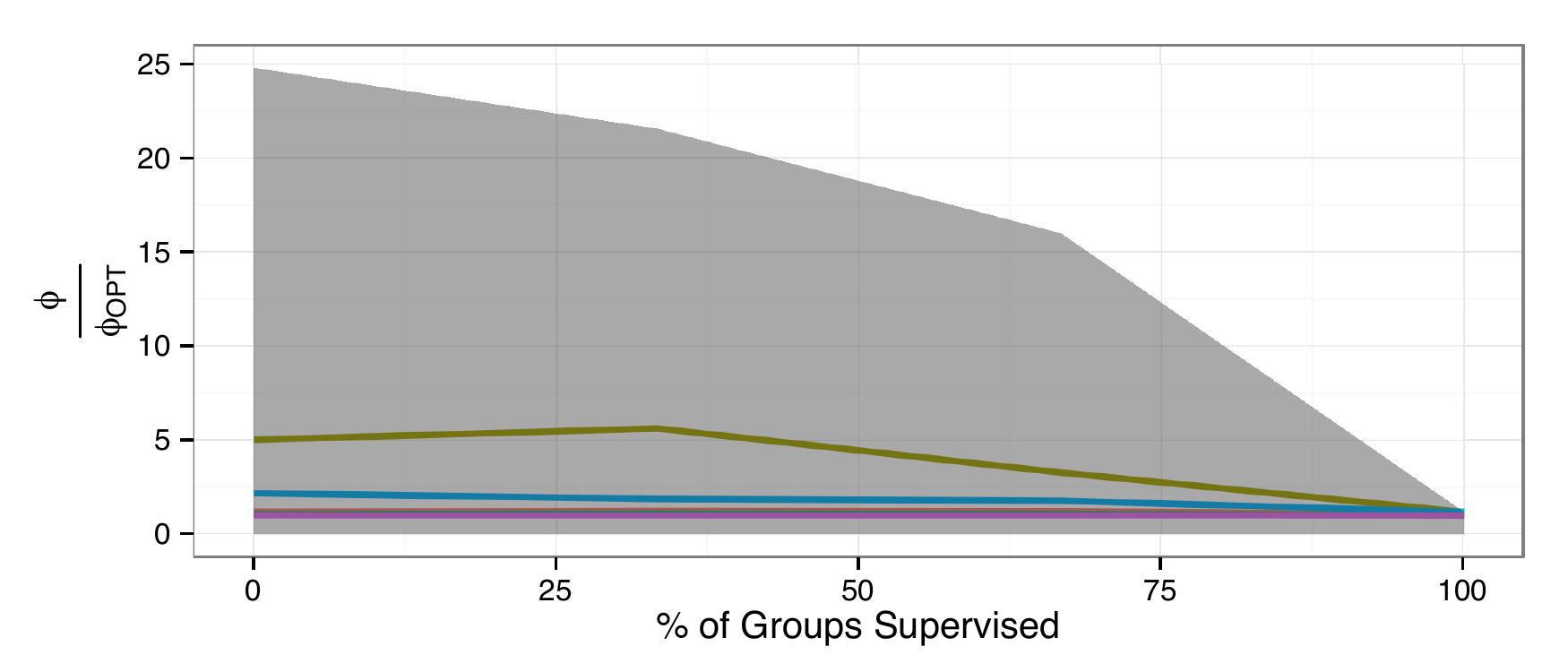}}
\subfigure[{\tt Hyperspectral}]{ \includegraphics[width=.45\textwidth]{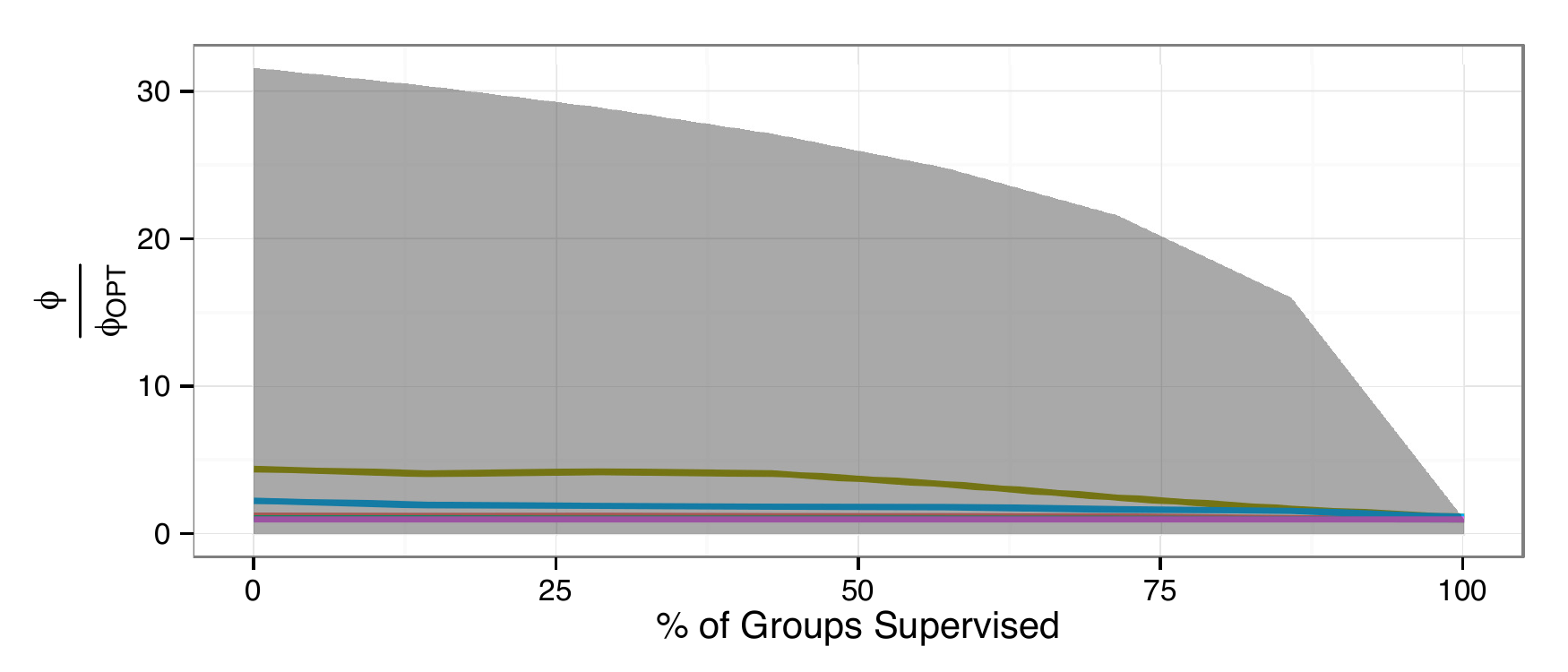}}
  \caption{Fractional cost (value of the potential over an estimate of the optimal) plotted as a function of the level of supervision for 100 Monte Carlo replicates.  Shading around the lines indicates $\pm$ two standard deviations.  The shaded region is the region corresponding to the theoretical cost in expectation from Section \ref{sec:theory}.   Colors indicate algorithm:\\
    \textcolor[rgb]{.62, .65, 	.03}{gold}: {\tt Constrained-KMeans} (without Lloyds iterations);\\
    \textcolor[rgb]{0.0, .72, .96}{blue}:  {\tt ss-k-means++} (without Lloyds iterations);\\
    \textcolor[rgb]{1.0, .44, .42}{red}:  {\tt Constrained-KMeans};\\
    \textcolor[rgb]{0.0, .76, .53}{green}: {\tt ss-k-means++}; and\\
    \textcolor[rgb]{.91, .40, .94}{pink}: {\tt Constrained-KMeans} initialized at true centroids of labels.}
    \label{fig:fcost}
\end{figure*}

Figure \ref{fig:iter} shows the number of iterations before Lloyd's converges.  We can see that improved selection of by $D^2$ weighted randomization leads to fewer iterations before convergence.  We expected this;  Arthur and Vassilvitskii \cite{arthur2007k} observed a similar phenomenon with no supervision.  More supervision did not seem to affect the number of iterations until very high levels (near 100\%).  For the real world datasets, we can see that the approximation to the optimal algorithm required more than one iteration to converge, indicating that the centroids of the true class labels do not match with the locally minimal cost solutions.  This means that the conditions for the supervision in our proofs do not hold for this dataset.  Nevertheless, both cost and ARI improve with additional supervision.

\begin{figure*}[t!]
    \centering
    \subfigure[{\tt Gaussian Mixture}] {\includegraphics[width=.45\textwidth]{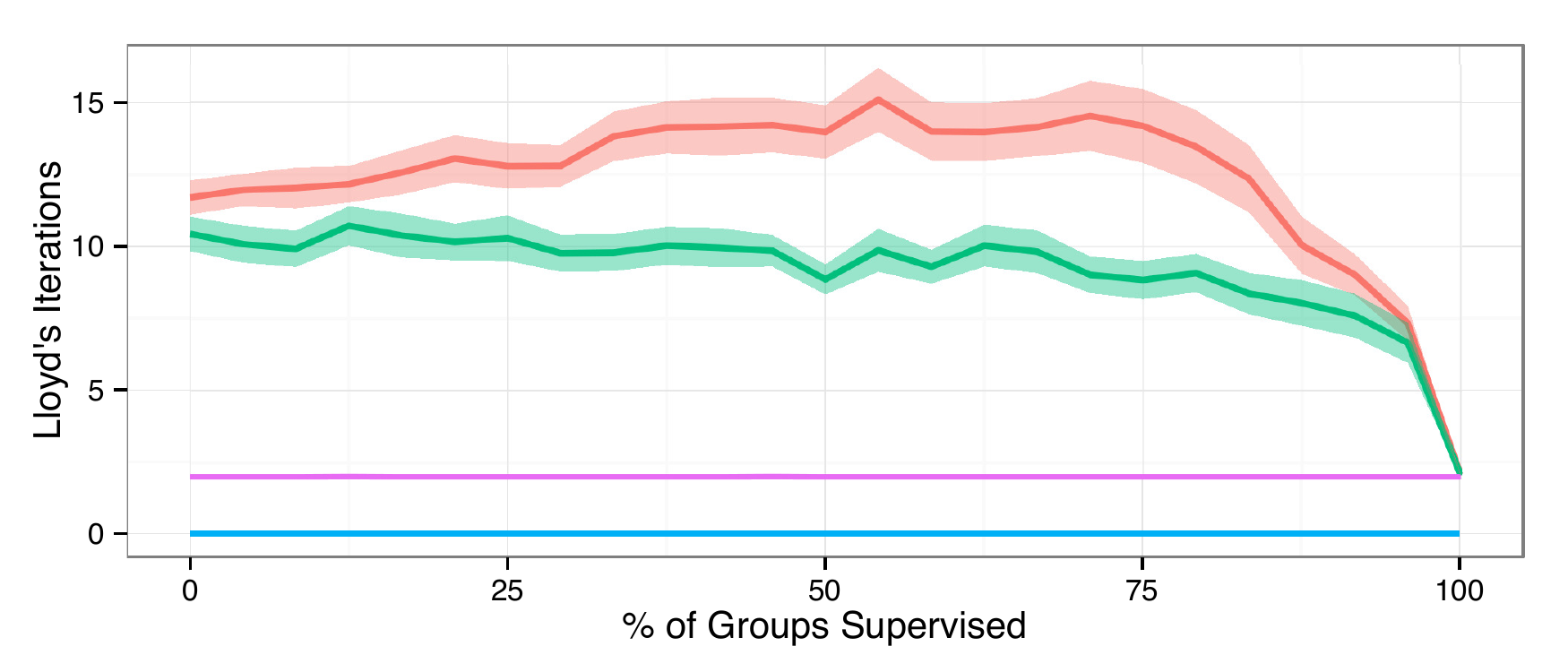}}
     \subfigure[{\tt Iris}]{ \includegraphics[width=.45\textwidth]{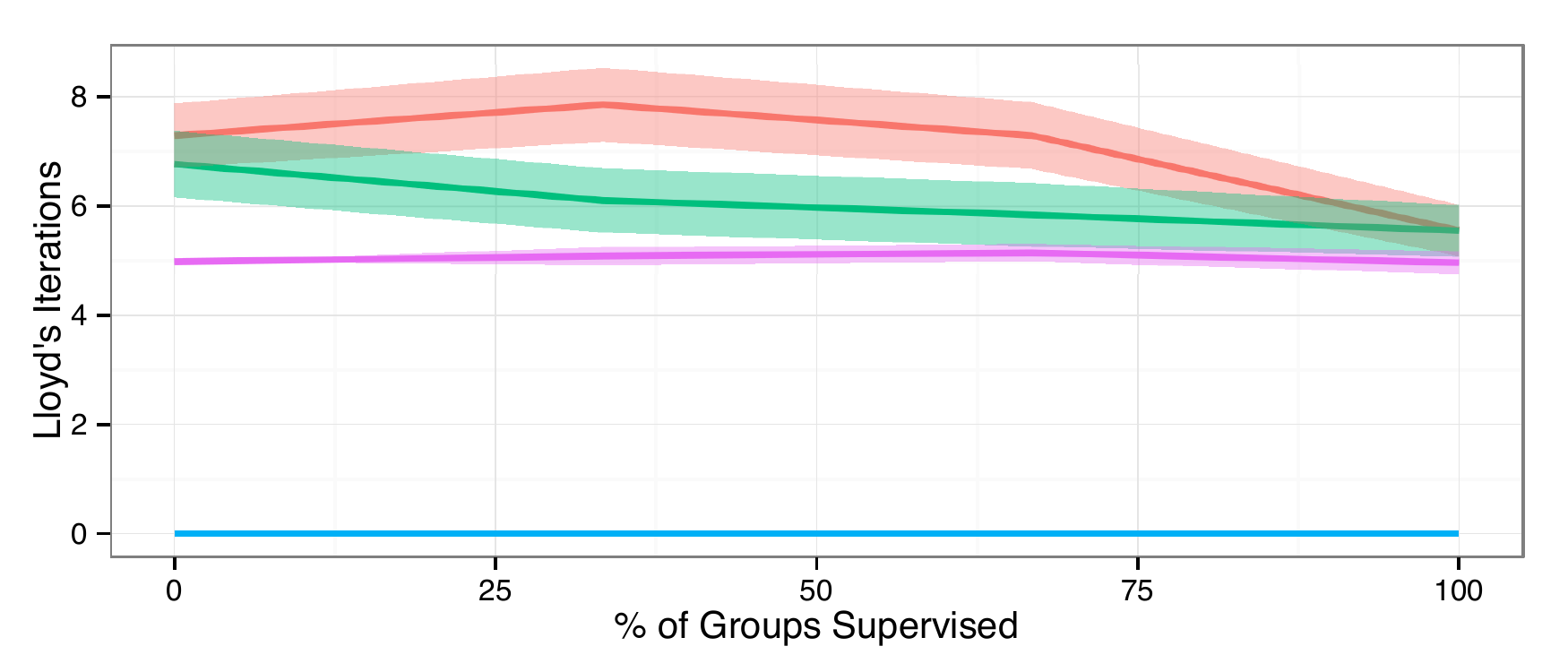}}
     \subfigure[{\tt Hyperspectral}]{ \includegraphics[width=.45\textwidth]{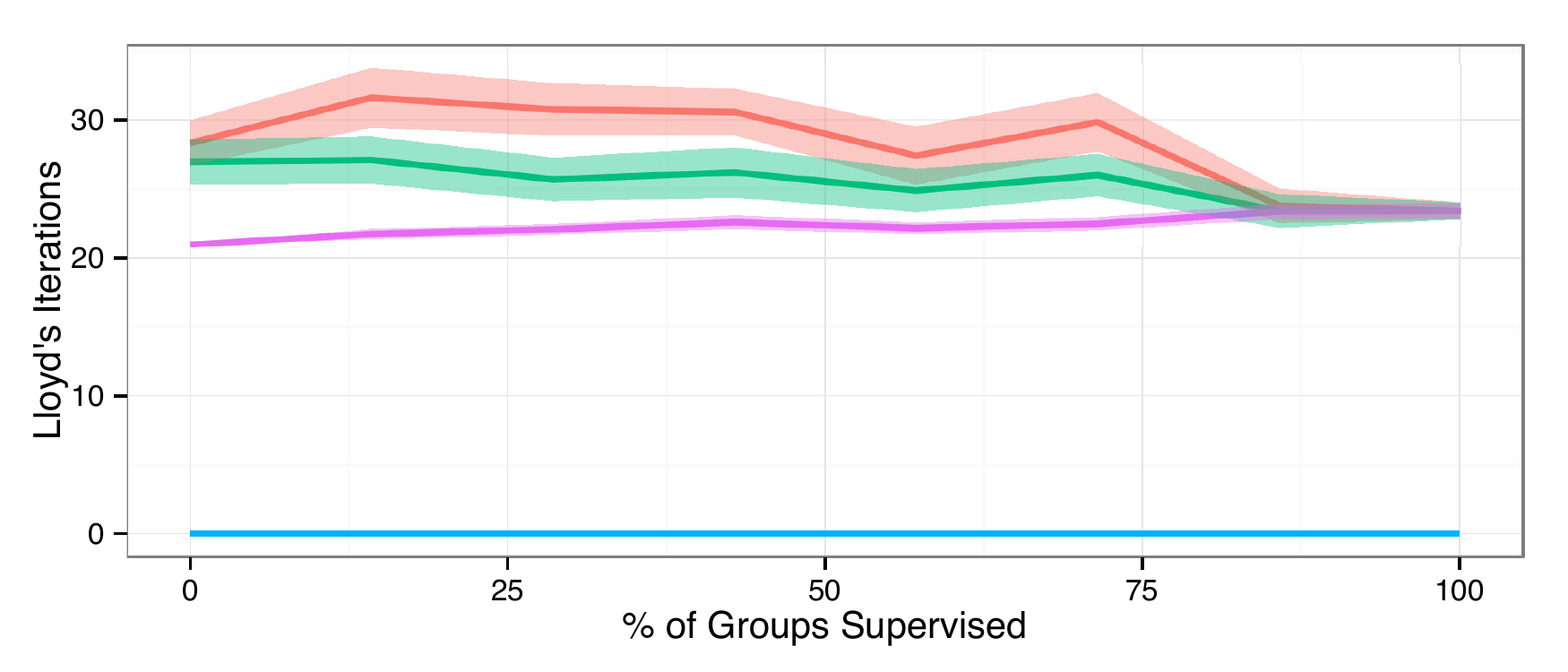}}
  \caption{Lloyd's iterations before convergence plotted as a function of the level of supervision for 100 Monte Carlo replicates.  Shading indicates $\pm$ two standard deviations.   Colors indicate algorithm:\\
    \textcolor[rgb]{.62, .65, 	.03}{gold}: {\tt Constrained-KMeans} (without Lloyds iterations);\\
    \textcolor[rgb]{0.0, .72, .96}{blue}:  {\tt ss-k-means++} (without Lloyds iterations);\\
    \textcolor[rgb]{1.0, .44, .42}{red}:  {\tt Constrained-KMeans};\\
    \textcolor[rgb]{0.0, .76, .53}{green}: {\tt ss-k-means++}; and\\
    \textcolor[rgb]{.91, .40, .94}{pink}: {\tt Constrained-KMeans} initialized at true centroids of labels.}
    \label{fig:iter}
\end{figure*}

Figure \ref{fig:ari} shows the ARI for all algorithms.  Note that supervision improves the ARI, as expected.  Also,  {\tt ss-k-means++} generally outperforms {\tt Constrained-KMeans}.  The same observation holds for the initialization only versions as well.  Remarkably, the true centroids and Lloyd's algorithm is outperformed by the initialization only methods on the {\tt Iris} and {\tt Hyperspectral} datasets at 100\% supervision for the ARI metric.  This is due to the fact that the true classes do not correspond to the minimum cost solution, which is what Lloyd's iterations would improve (apparently at the cost of ARI).

\begin{figure*}[t!]
    \centering
    \subfigure[{\tt Gaussian Mixture}] {\includegraphics[width=.45\textwidth]{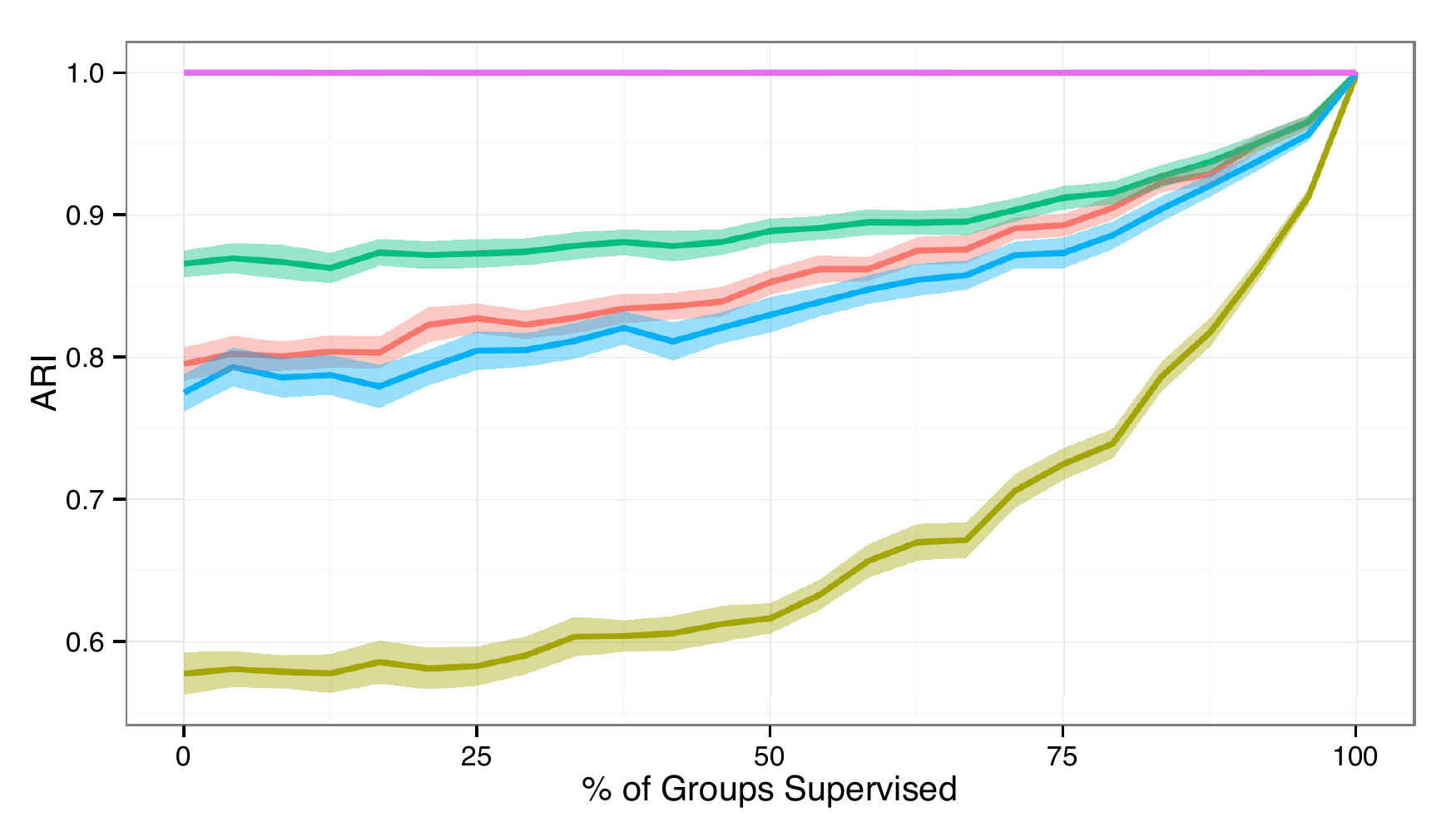}}
     \subfigure[{\tt Iris}]{ \includegraphics[width=.45\textwidth]{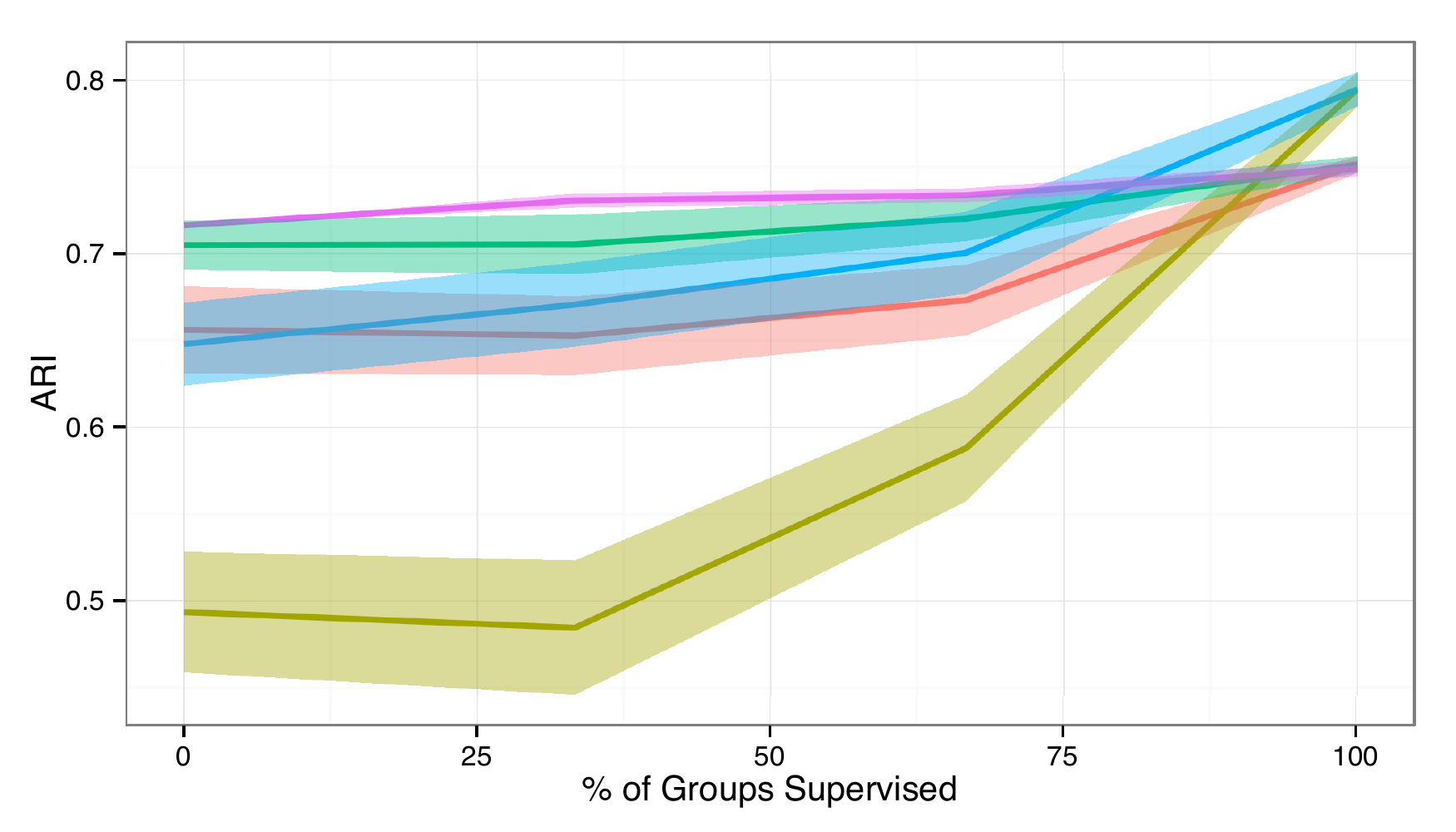}}
        \subfigure[{\tt Hyperspectral}]{ \includegraphics[width=.45\textwidth]{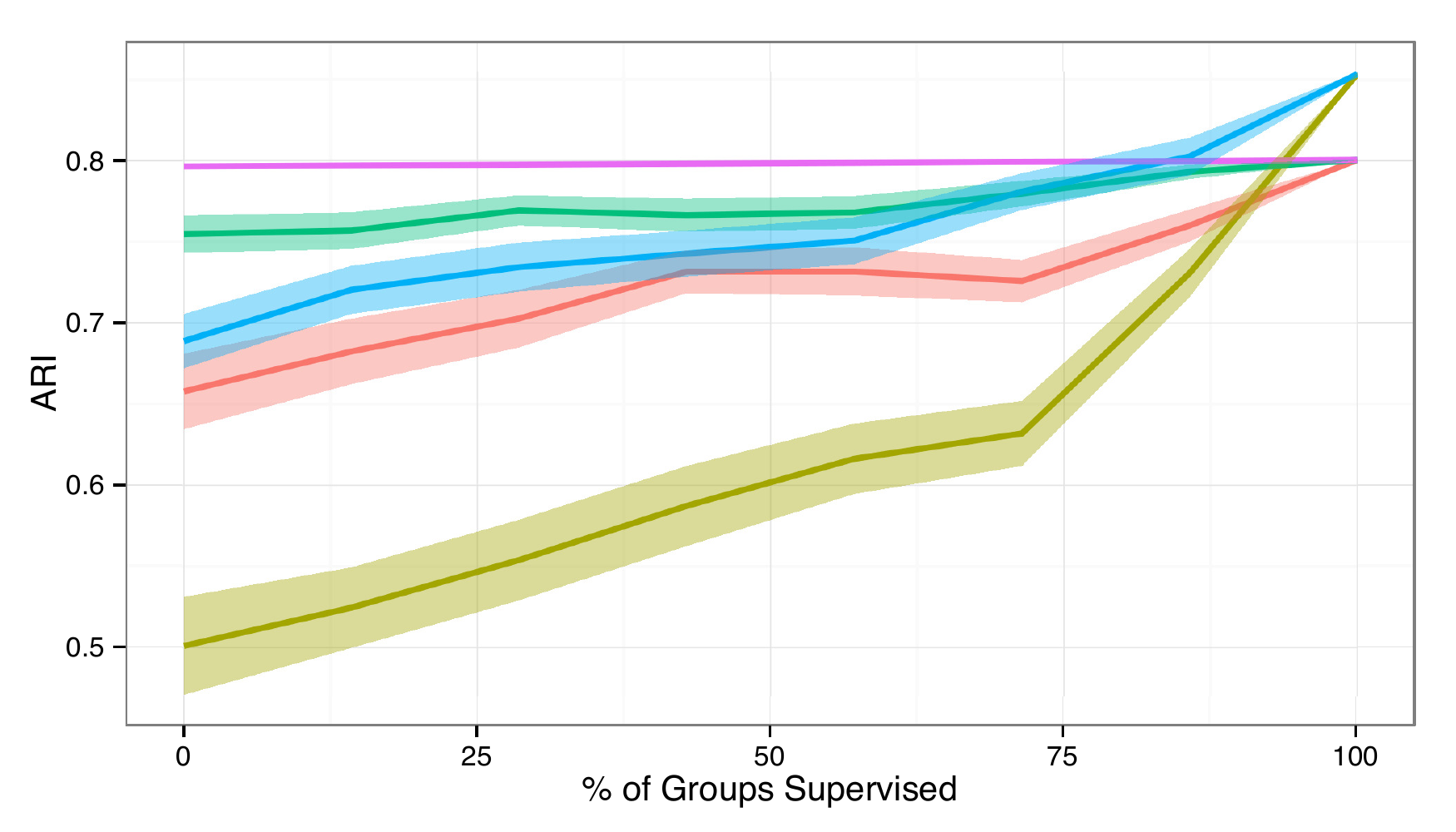}}
    \caption{Average ARI shown as a function of the level of supervision for 100 Monte Carlo replicates.  Shading indicates $\pm$ two standard deviations.  Green beats red.
    Colors indicate algorithm:\\
    \textcolor[rgb]{.62, .65, 	.03}{gold}: {\tt Constrained-KMeans} (without Lloyds iterations);\\
    \textcolor[rgb]{0.0, .72, .96}{blue}:  {\tt ss-k-means++} (without Lloyds iterations);\\
    \textcolor[rgb]{1.0, .44, .42}{red}:  {\tt Constrained-KMeans};\\
    \textcolor[rgb]{0.0, .76, .53}{green}: {\tt ss-k-means++}; and\\
    \textcolor[rgb]{.91, .40, .94}{pink}: {\tt Constrained-KMeans} initialized at true centroids of labels.}
    \label{fig:ari}
\end{figure*}

\section{Conclusions}
In this paper, we present a natural extension of {\tt k-means++} and {\tt Constrained-KMeans}.  Then, we prove the corresponding bound on the expectation of the cost under some conditions on the supervision.  No assumptions are made about the distribution of the data.  Finally, we demonstrated that on three datasets judicious supervision and good starting center selection heuristics improve clustering performance, cost, and iteration count.

Possible future theoretical work includes incorporating the advances set forth in the extensions to the original {\tt k-means++} paper.  For example, we could produce semi-supervised versions of {\tt k-means\#} \cite{bahmani2012scalable} and {\tt k-means||} \cite{ailon2009streaming} with commensurately improved bounds.  Relaxing the constraints to the pairwise cannot-link and must-link constraints as in  \cite{wagstaff2001constrained} is also desirable, because the assumption of exogenously provided hard labels is often untenable.  Other assumptions that would be nice to relax would be the equal cluster shapes and cluster volume implicit in {\tt k-means} clustering.

\section*{Acknowledgements}
The authors would like to thank Theodore D. Drivas for helping to test the codes used in the experiments and for consultation on aesthetics.

\section*{Funding}
This work is partially funded by the National Security Science and Engineering Faculty Fellowship (NSSEFF),
the Johns Hopkins University Human Language Technology Center of Excellence (JHU HLT COE), and the
XDATA program of the Defense Advanced Research Projects Agency (DARPA) administered through Air
Force Research Laboratory contract FA8750-12-2-0303.

\bibliographystyle{gSCS}

\end{document}